\documentclass[11pt]{amsart}
\usepackage{amssymb}
\usepackage{epsfig}  		
\usepackage{epic,eepic}       
\usepackage{amsmath}
\usepackage{amsmath,amsfonts,amssymb}
\usepackage{graphicx}
\usepackage{amscd}
\usepackage{pb-diagram}
\usepackage{url}
\newtheorem{theorem}{Theorem}[section]

\newtheorem{lemma}[theorem]{Lemma}

\usepackage[usenames,dvipsnames]{xcolor}
\usepackage{color}

\theoremstyle{definition}
\newtheorem{definition}[theorem]{Definition}
\newtheorem{example}[theorem]{Example}
\theoremstyle{remark}

\numberwithin{equation}{section}
\numberwithin{theorem}{section}
\numberwithin{table}{section}

\usepackage[noend]{algorithmic}
\usepackage{algorithm}
\floatname{algorithm}{Algorithm}

\newcommand{\cardd}[1]{\ensuremath{\left\|#1\right\|}}

\begin{document}

\title{Topology preserving thinning of cell complexes}

\author{Pawe{\l}~D{\l}otko,
        and~Ruben~Specogna}% <-this % stops a space
\thanks{P. D{\l}otko is with the Department of Mathematics, University of Pennsylvania, Philadelphia, USA and Institute of Computer Science, Jagiellonian University, Krak\'{o}w, Poland e-mail: dlotko@sas.upenn.edu. P.D. is supported by the grant DARPA: FA9550-12-1-0416 and AFOSR: FA9550-14-1-0012.}% <-this % stops a space
\thanks{R. Specogna is with the Department of Electrical, Mechanical and Management Engineering, Universit\`{a} di Udine, Udine, Italy email: ruben.specogna@uniud.it (see http://www.comphys.com). R.S. is partially supported by the Italian Ministry of Education, University and Research (MIUR) project PRIN 2009LTRYRE.}% <-this % stops a space

\maketitle

\begin{abstract}
A topology preserving skeleton is a synthetic representation of an object that retains its topology and many of its significant morphological properties. The process of obtaining the skeleton, referred to as skeletonization or thinning, is a very active research area. It plays a central role in reducing the amount of information to be processed during image analysis and visualization, computer-aided diagnosis or by pattern recognition algorithms.

This paper introduces a novel topology preserving thinning algorithm which removes \textit{simple cells}---a generalization of simple points---of a given cell complex. The test for simple cells is based on \textit{acyclicity tables} automatically produced in advance with homology computations. Using acyclicity tables render the implementation of thinning algorithms straightforward. Moreover, the fact that tables are automatically filled for all possible configurations allows to rigorously prove the generality of the algorithm and to obtain fool-proof implementations. The novel approach enables, for the first time, according to our knowledge, to thin a general unstructured simplicial complex.
Acyclicity tables for cubical and simplicial complexes and an open source implementation of the thinning algorithm are provided as additional material to allow their immediate use in the vast number of practical applications arising in medical imaging and beyond.
%In fact, this is difficult to implement with standard techniques based on case study.
\\
\textbf{Keywords:}\\
skeleton, skeletonization, iterative thinning, topology preservation, algebraic topology, homology, topological image analysis
\end{abstract}

\section{Introduction}

Thinning (or skeletonization) is the process of reducing an object to its skeleton.
The topology preserving skeleton may be informally defined as a thinned subset of the object that retains the same topology of the original object and often many of its significant morphological properties.
Thinning is a very active research area thanks to its ability of reducing the amount of information to be processed for example in medical image analysis and visualization as well as simplifying the development of pattern recognition or computer-aided diagnosis algorithms. Hence, it is not surprising that thinning gained a pivotal role in a wide range of applications.
An exhaustive review of the literature is beyond the scope of this work.
Two dimensional skeletons have been used for digital image analysis and processing, optical character and fingerprint recognition, pattern recognition and matching, and binary image compression since a long time ago, see for example the survey paper~\cite{LamLeeSuen}.
More recently, three dimensional skeletons have been widely used in computer vision and shape analysis~\cite{sundar}, in computer graphics for mesh animation~\cite{cornea} and in computer aided design (CAD) for model analysis and simplification~\cite{deyacm},~\cite{deycad} and for topology repair~\cite{topologyrepair}.

There is also a vast literature of applications of skeletons in medical imaging. They have been used for route planning in virtual endoscopic navigation~\cite{pathplanning}, for example in virtual colonoscopy~\cite{colon1}-\cite{tmi_huang} or bronchoscopy~\cite{bronchi}.
Skeletons have also been an important part of clinical image analysis by providing centerlines of tubular structures.
In particular, there is a large body of literature showing applications of skeletons to blood veins centerline extraction from angiographic images~\cite{tmi_huang},~\cite{tmi_angio}-\cite{coronary}, and intrathoracic airway trees classification for the evaluation of the bronchial tree structure~\cite{tmi_pal1}.
Also protein backbone models can be produced with techniques based on skeletons~\cite{proteins}.
Furthermore, many computer-aided diagnostic tools rely on skeletons. For example, skeletons have been used to identify blood vessels stenoses~\cite{tmi_kitamura}-\cite{tannen}, tracheal stenoses~\cite{tmi_pal2}, polyps and cancer in colon~\cite{polipi} and left atrium fibrosis~\cite{tmi_ravanelli}.

There are medical applications of skeletons where topology preservation is essential. Non invasively determine the three-dimensional topological network of the trabecular bone~\cite{tmi_gomberg} is a good example. Indeed, many studies demonstrate that the elastic modulus and strength of the bones is determined by the topological interconnections of the bone structure rather than the bone volume fraction~\cite{bones},~\cite{sahabones}. Therefore, topological analysis plays a fundamental role in computer-aided diagnostic tools for osteoporosis~\cite{bones}.

%%%%%%%%%%%%%%%%%%%%%%%%%%%%%%%%%%%%%%%%%%%
Topology preserving thinning is non trivial and a vast literature, briefly surveyed in Sec. \ref{prior}, has been dedicated to this topic.
In particular, thinning by iteratively removing \textit{simple points}~\cite{simplePoints} is a widely used and effective technique.
It works locally and for this reason is efficient and easy to implement.

While reading the literature one may notice that thinning algorithms are claimed to be ``topology preserving,'' even though in most cases a precise statement of what that means is left unaddressed. This paper uses \textit{homology theory}~\cite{hatcher} to rigorously define what the virtue of being topology preserving actually consists of. This theory is less intuitive than the concepts used so far, including simple homotopy type \cite{cohen}, but exhibits some important theoretical and practical advantages that will be highlighted later in the paper.
We remark that a homological definition of simple points has already been used in the context of skeletonization in~\cite{mish-Pil},~\cite{tannenbaum}, but only in the case of cubical complexes. This paper generalizes this idea to cell complexes that are more general than cubical complexes. There are many applications that would benefit from an algorithm that deals with general unstructured simplicial complexes~[p. 35]\cite{mesh}. In fact, the geometry of three-dimensional objects is frequently specified by a triangulated surface, obtained for example by using an isosurface algorithm as \textit{marching cubes}~[p. 539]\cite{mesh}, \cite{tmi_huang} applied on voxel data from computed tomography, magnetic resonance imaging or any other three-dimensional imaging technique. Another possibility is to obtain the triangulations from the convex hull of point clouds provided for example by 3d laser scanners.
Triangulated surfaces offer two potential advantages over voxel representation. They allow to adaptively simplify the surface triangulation, see for example Fig. 16.20 in~[p. 549]\cite{mesh}. They also allow to visualize and edit the object efficiently with off-the-shelf software (for example the many visualization and editing tools for stereo lithography) and without the starcase artifacts typical of voxel representation of objects with curved boundary. One may even easily print the object with additive manufacturing technology (i.e. 3d printers).

Another issue that arises reading the literature is that many different definitions of topology preserving skeleton exist.
In some papers, the skeleton is obtained by removing simple pairs in the spirit of simple homotopy theory by what is well known as \textit{collapsing} in algebraic topology \cite{hatcher}. The resulting skeleton, if no other constraints are used, has a lower dimension with respect to the input complex. On the contrary, this paper assumes that the skeleton is always a solid object of the same dimension as the initial complex.
The difference is highlighted in Fig.~\ref{fig:def}.
% since we assume that topology of the complement of $S$ is the same as the topology of the complement of $C$. The difference is explained in the Fig.
\begin{figure}[t]
\centerline{\includegraphics[scale=0.5]{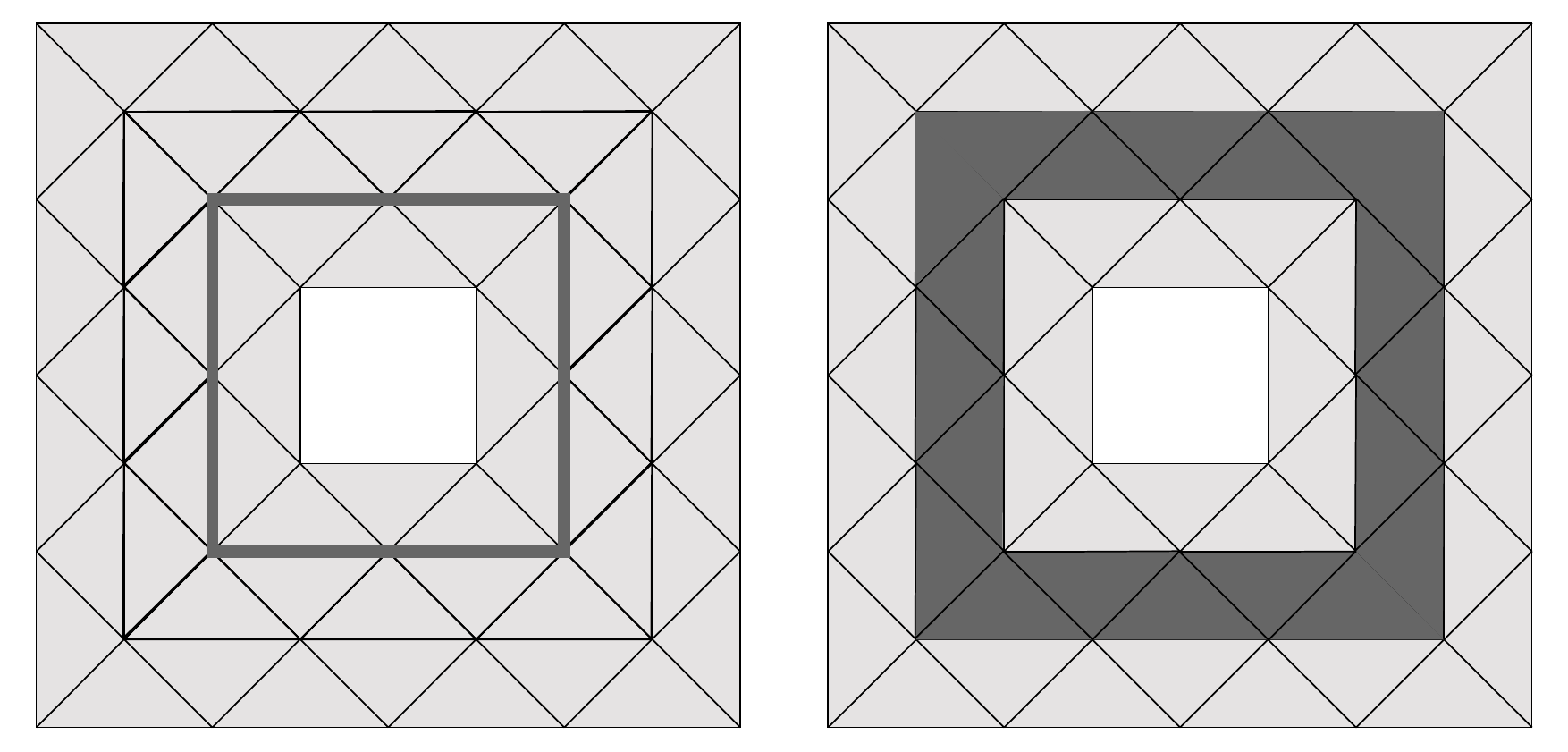}}
\caption{Let us consider, as an example, a 2-dimensional simplicial complex $\mathcal{K}$ representing an annulus.
On the left, the thick cycle represents a 1-dimensional skeleton of $\mathcal{K}$ obtained by means of standard \textit{collapsing} of $\mathcal{K}$ \cite{hatcher}. On the right, the gray triangles represent a 2-dimensional skeleton of $\mathcal{K}$ according to the definition used in this paper. This kind of skeleton is obtained after removing a sequence of top dimensional cells.}
\label{fig:def}
\end{figure}

In this paper the skeleton of a given complex $\mathcal{K}$ is defined as a subset $S \subset \mathcal{K}$ that is obtained from $\mathcal{K}$ after removing a sequence of top dimensional cells. We require that the \textit{homology}~\cite{hatcher} of the initial complex $\mathcal{K}$ is preserved during this process. In particular, a top dimensional cell can be safely removed if this does not change the homology of the \textit{complement} of $S$. Fig.~\ref{fig:2intersection} provides an intuitive explanation why the last requirement is desirable. This additional requirement, to the best of our knowledge, is not documented in other papers. We call this cell a \textit{simple cell}, which is a generalization of the idea of \textit{simple points}~\cite{simplePoints} in digital topology.
\begin{figure}[t]
\centerline{\includegraphics[scale=0.5]{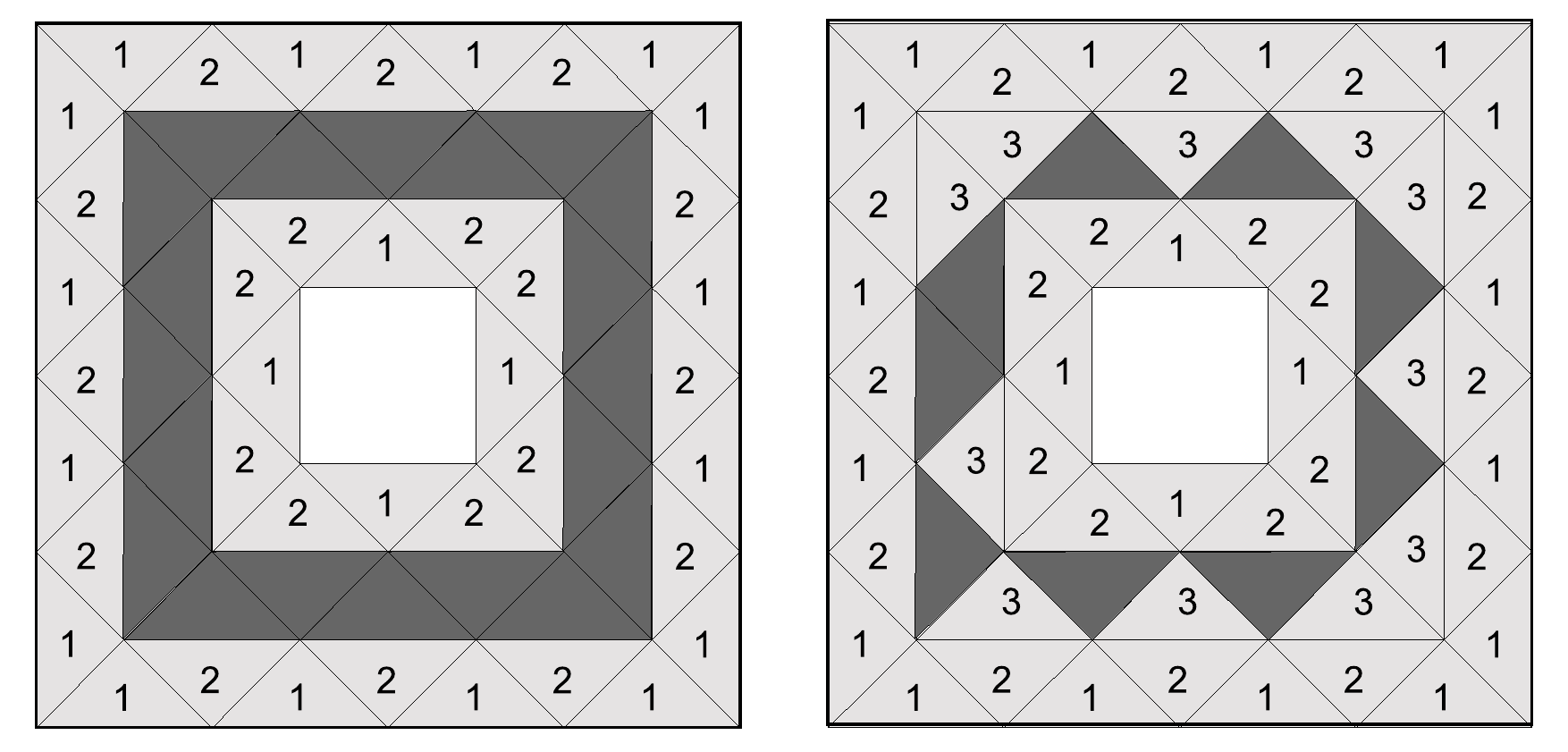}}
\caption{Suppose the iterative thinning Algorithm~\ref{alg:thinning} is used to skeletonize a 2-dimensional simplicial complex $\mathcal{K}$ representing an annulus. Let the dark gray triangles belong to the skeleton. On the left, the result obtained by checking whether the removal of a cell changes the topology of $\mathcal{K}$ complement. On the right, the result obtained by checking  whether the removal of a cell changes the topology of $\mathcal{K}$. The numbers inside triangles indicate the iteration number of the while loop in the thinning algorithm when they were removed.
Both skeletons preserve topology. However, in most applications, the skeleton on the left is preferred.
%The numbers inside triangles indicate the iteration number of the \emph{while} loop of the thinning algorithm when they were removed. The dark gray triangles belong to the skeleton. On the left, the skeleton is obtained by checking the acyclicity of the intersection of a given element with the complement of $\mathcal{K}$. On the right, the skeleton is obtained by checking the acyclicity of the intersection of a given element with what remains of the set $\mathcal{K}$. Both skeletons preserve topology. However, for most applications, the skeleton obtained on the left is the useful one. \textbf{[should we say that dual edges form a path?]}
}
\label{fig:2intersection}
\end{figure}
Clearly, in nontrivial cases, the skeleton $S$ is not unique. %Later we show that, for dimensions $2$ and $3$, it coincides with a stronger homotopy based definition presented for cubical grids for instance in~\cite{simplePoints}.

Resorting to explicit homology computations to detect simple points as in~\cite{mish-Pil},~\cite{tannenbaum},~\cite{simplePoints} is quite computationally intensive, as the worst-case complexity of homology computations is cubical, see also the discussion in~\cite{simplePoints}. In this paper, we introduce a much more efficient solution by exploiting the idea of tabulated configurations, i.e. \textit{acyclicity tables}, that are described in detail in Sec.~\ref{sec:tables}.

Usually, a skeleton also requires to preserve the shape of the object.
In this paper we show some very simple proof of concept idea how to preserve both homology and shape. Of course, this is just an example to illustrate how the idea of acyclicity tables can be used together with some additional techniques that guarantee shape preservation.

The rest of the paper is organized as follows.
In Section~\ref{prior} the prior work on thinning algorithms is surveyed. Section~\ref{contrib} analyzes the original contributions of the present paper. In Section~\ref{sec:introToTopology} the property of being a homology preserving thinning is rigorously stated. In Section~\ref{sec:tables} the concept of acyclicity tables is introduced, whereas, in Section~\ref{sec:topologyPreservingAlgorithm}, the topology preserving thinning algorithm is presented. Section~\ref{sec:benchmarks} discusses the results of the thinning algorithm on a number of benchmarks and, finally, in Section~\ref{sec:conclusions} the conclusions are drawn.

\subsection{Prior work}\label{prior}
There are hundreds of papers about thinning.
Most of them fall into two categories. On one hand, there are papers using morphological operations like erosion and dilatations to obtains skeletons, see~\cite{kong} and references therein. They do not guarantee topology preservation in general. The others use the idea of removing the so called simple points from the given cell complex, see~\cite{mish-Pil},~\cite{tannenbaum},~\cite{simplePoints}.
Without pretending to be exhaustive, in the following we resume previous results. % for 2-dimensional images, 3-dimensional cubical complexes, for 2- and 3-dimensional simplicial complexes and more general cell complexes.

\subsubsection{2-dimensional images}
Most of the work on thinning regard finding skeletons of 2-dimensional images. A very comprehensive survey on this topic may be found in~\cite{LamLeeSuen}.
This case is well covered in literature and general solution exists, see for example \cite{zhang}-\cite{Ashwin}.

\subsubsection{3-dimensional cubical complexes}
In case one wants to skeletonize three (or higher) dimensional images, there are much less papers available in literature. Most of them rely on case study, see~\cite{lob}-\cite{palagi5}. The problem is that it is hard to prove that a rule-based algorithm is general, i.e. it removes a cell if and only if its removal does not change topology. In 3d there are more than 134 millions possible configurations for a cube neighborhood and only treating correctly \textit{all} of them gives a correct thinning algorithm. %It is fairly easy to came up with a partial criterion that may provide redundant skeletons.
References~\cite{mish-Pil},~\cite{tannenbaum},~\cite{simplePoints} use explicit homology computations to detect simple points.

There are a number of papers presenting thinning algorithms for 3-dimensional images in which Euler characteristic is used to guarantee topology preservation, see for example~\cite{Ashwin}, \cite{abbera} and references therein. The problem is that Euler characteristic is a rather raw measure of topology and it is not sufficient to preserve topology in general for 3-dimensional cubical complexes. For three dimensional images one needs to use both Euler characteristic and connectivity information to preserve topology, but this is not sufficient for four dimensional images. %See Section~\ref{sec:euler} for details.
%In this paper the more accurate concepts of Betti numbers and homology groups is used.
%In addition, as a site effect, a rigorous proof is provided that in case of 2d images Euler characteristic suffices to preserve Betti numbers while in 3d it is not sufficient.

\subsubsection{2-dimensional simplicial complexes and cell complexes}
All the strategies presented so far are applicable only to cubical grids (pixels, voxels, ...). To our best knowledge, there are just a few papers dealing with 2d grids that are not cubical, and they are restricted to 2d binary images modeled by a quadratic, triangular, or hexagonal cell complex, see~\cite{2dsim1}-\cite{2dsim4}.
The main reason for the lack of results on general 2d simplicial complexes may be the absence of regularity in unstructured simplicial grids that makes case-study algorithms very hard to devise and to implement. This gap in the literature is covered by the present paper.

\subsubsection{3-dimensional simplicial complexes and cell complexes}
To the best of our knowledge, we are not aware of algorithms that deal with unstructured 3d simplicial complexes or more general cell complexes. There are only some papers that find the 1-dimensional skeleton by using the well known collapsing in algebraic topology~\cite{skeletonsFromPointClouds}-\cite{liu}. %,~\cite{3dpc1}-\cite{3dpc2}.
Again, this gap in the literature is covered by the present paper.
%
%There is also a vast literature on building skeletons from point clouds, see~\cite{skeletonsFromPointClouds}. Since we assume to have cell complex as an input, those are not relevant for us.

\subsection{Summary of paper contributions}\label{contrib}
%A proof of concept C++ implementation that works for 3-dimensional simplicial complexes is provided to allow its immediate use for applications~\cite{thinit}. Also the acyclicity tables for 2- and 3-dimensional cubical complexes and 2-, 3- and 4-dimensional simplicial complexes are provided, whereas the tables for other cell decompositions will be generated on demand.

In this Section the main novelties presented in this paper are summarized:
\begin{enumerate}
\item The claim ``topology preserving thinning'' is rigorously defined, for any cell complex, by means of homology theory.
\item A novel topology preserving thinning algorithm that removes simple cells is introduced. Conceptually this algorithm falls into the category of thinning algorithms based on simple points and generalizes all previous papers. In fact, the acyclicity tables introduced in this paper give a classification of all possible simple points that can occur in a given cell complex. Therefore, no rules are needed since all of them are encoded into the acyclicity tables.
\item The most important advantage of the novel approach is that acyclicity tables are \textit{automatically} filled in advance, for any cellular decomposition, with homology computations performed by a computer. Therefore, once the tables are available, the implementation of a thinning algorithm is straightforward since identifying simple cells requires just queering the acyclicity table. No other topological processing is needed.
\item The fact that acyclicity tables are filled \textit{automatically} and \textit{correctly}, for \textit{all} possible configurations, provides a rigorous computer-assisted mathematical proof that the homology-based thinning algorithm preserves topology. It is also verified, simply by checking all acyclic configurations, that using Euler characteristic is not enough to ensure preservation of topology in 3-dimensional or higher dimensional cubical and simplicial complexes. However, when one checks both Euler characteristic and that the number of connected component before and after cell removal remains one, then topology is preserved. Checking Euler characteristic together with connectivity does not suffice to preserve topology in 4d.
\item The acyclicity tables for simplicial complexes of dimension 2, 3 and 4 and for cubical complexes of dimension 2 and 3, that can be freely used in any implementation of the proposed algorithm, are provided as supplemental material at~\cite{thinit}. This way, we dispense readers to implement homology computations to produce the acyclicity tables.
\item The thinning algorithm, unlike the standard collapsing of algebraic topology~\cite{hatcher}, does not require the whole cell complex data structure but it uses only the top dimensional elements of the complex, with obvious memory saving.
\item As a proof of concept, an open source C++ implementation that works for 3-dimensional simplicial complexes is provided to the reader as supplemental material at~\cite{thinit}. We remark that the code is optimized for readability and memory usage and not for speed.
\end{enumerate}

% needed in second column of first page if using \IEEEpubid
%\IEEEpubidadjcol

\section{Topology preserving thinning by preserving homology}
\label{sec:introToTopology}
When one claims that an algorithm ''preserves topology,'' in order to give a precise meaning to this statement, one needs to specify which topological invariant is preserved. In the literature, the invariant is assumed to be, in most cases implicitly, the so called homotopy type~\cite{hatcher}. The problem of this choice is that this strong topological invariant in general is not computable according to Markov~\cite{markov}. This is the reason why in this paper we propose to use homology theory which is computable in place of homotopy theory, even if it is weaker than the former. Indeed, homology seems to be the strongest topological invariant that can be rigorously and efficiently computed.
Therefore, every time we claim that topology is not changed, implicitly we mean that the homology is not changed.

Homology groups may be used to measure and locate holes in a given space. Zero dimensional holes are the connected components. One dimensional holes are handles of a given space, whereas two dimensional holes are voids totally surrounded by the considered space (i.e. cavities). One can look at a $n$-dimensional hole as something bounded by a deformed $n$-sphere. A space is homologically trivial (or \textit{acyclic}) if it has one connected component and no holes of higher dimensions. A rigorous definition of homology groups is not presented in this paper due to the availability of rigorous mathematical introductions in any textbook of algebraic topology as~\cite{hatcher} and the lack of space. For a more intuitive presentation for non mathematicians one one may consult~\cite{cpc},~\cite{cicp}.

In this paper, we consider in particular two standard ways of representing spaces, namely the \textit{simplicial} and \textit{cubical} complexes.
A \textit{n-simplex} is the convex hull of $n+1$ points in general position (point, edge, triangle, tetrahedron, 4-dimensional tetrahedron). A simplex spanned with vertices $x_1,\ldots,x_n$ is denoted by $[x_1,\ldots,x_n]$. By a \textit{face} of a $n$-simplex $A$ we mean the simplices spanned by a proper subset of vertices spanning $A$. A \textit{simplicial complex} $\mathcal{S}$ is a set of simplices such that for every simplex $A \in \mathcal{S}$ and every face $B$ of simplex $A$, $B \in \mathcal{S}$.
Pixels (2-cubes) and voxels (3-cubes) are widely used in image analysis. They form a Cartesian grid, that is a special case of grid where cells are unit squares or unit cubes and the vertices have integer coordinates. Even though we assume to deal with a Cartesian grid, the results presented in this paper hold also for more general grids such as a rectilinear grid, that is a tessellation of the space by rectangles or parallelepipeds that are not, in general, all congruent to each other. Therefore, we define a cubical complex $\mathcal{K}$ as a set of cubes such that for every cube $A \in \mathcal{K}$ and for every $B$ being face of we have $A$, $B \in \mathcal{K}$. We want to stress here that we assume every cell to be closed, i.e. if a cell is present in a complex, so do are its faces.

In the iterative thinning algorithm presented in this paper, the top dimensional cells (simplices, voxels) are iteratively removed from the object. Homology theory is used to ensure that removing of a given cell (simplex/cube) does not change the topology of the object. If removal of a cell does not change the topology, the cell is said to be \emph{simple}. Due to efficiency reasons, the homology cannot be recomputed after removing every single element. In fact, one may compute the homology of a cell complex for instance with~\cite{capd} software, but the worst case computational complexity is cubical. % of such procedure is cubical w.r.t. the size of the input.
Therefore, the main idea is to rely on the so called Mayer--Vietoris sequence~\cite{hatcher}. Let us express the considered space $X$ as $X = X' \cup V$, where $V$ is a single top dimensional simplex or voxel. The sequence states that once the intersection $X' \cap V$ is homologically trivial, then the homology of $X'$ is the same as the homology of $X$. This important result is the key of the method presented in this paper. In fact, it implies that, in order to preserve the homology of $X$ with respect to $X'$, one should check the homology of the intersection $X' \cap V$. In practice, this may be easily performed with the~\cite{capd} software.

The main novelty of this paper is to present a different idea to speed up the computations.
Let $V$ be a simplex or voxel. By $bd\ V$ we denote the boundary of $V$, i.e. \textit{all} lower dimensional cells which are entirely contained in the closure of $V$.
The idea is based on the observation that in $bd\ V$ there are not too many elements, namely
\begin{enumerate}
\item $6$ in case of triangle ($2$-dimensional simplex);
\item $14$ in case of tetrahedron ($3$-dimensional simplex);
\item $30$ in case of 4-dimensional simplex (i.e. the convex hull of $5$ points in $\mathbb{R}^4$ in general position);
\item $8$ in case of pixel (2-dimensional cube);
\item $26$ in case of voxel (3-dimensional cube).
\end{enumerate}
By \emph{configuration} we mean any subset of $bd\ V$. When looking for simple cells, the configuration characterizes the way $V$ intersects the complement of the set that we aim to thin. A configuration is \emph{acyclic} if its homology---computed as the homology of the corresponding chain complex---is trivial. % (i.e. it has one connected component and no holes of higher dimension.)
Since the number of all possible configurations in $bd\ V$ is $2^i$, where $i$ is the number of boundary elements of $V$, one may pre-compute the homology of all the configurations and store them in a lookup table. In this case, homology computations are done only in a pre-processing stage and once and for all. After creating such an acyclicity table, one may instantly (i.e. in $O(1)$ time) get the answer whether the intersection $X' \cap V$ is homologically trivial or not. This is the strategy that we aim to use in the thinning algorithm. Next section shows how to use the acyclicity tables and how to obtain them automatically.

\section{Use of acyclicity tables and their generation}\label{sec:tables}
Let us consider a generic cell $V$ of a cell complex. Let us fix an order of all boundary elements $\{b_1,\ldots,b_n\}$ of $V$. We consider all subsets of the set $\{b_1,\ldots,b_n\}$ and enumerate them in the following way. For $J \subset \{1,\ldots,n\}$ the number of a subset $\{b_i\}_{i \in J}$ is $\sum_{i \in J} 2^i$. The \emph{acyclicity table} of $V$ is an array of size $2^{n}$ having, at position $j = \sum_{i \in J} 2^i$, the value \emph{true} if the configuration $\{b_i\}_{i \in J}$ is acyclic and \emph{false} otherwise.
%
%In this section we describe how to use the acyclicity tables and we also give an idea how one can obtain them.

Let us describe how the acyclicity table is constructed and used starting by considering a tetrahedron (3-dimensional simplex), see Fig. \ref{fig:templates}a. Let us enumerate vertices, edges and faces of tetrahedron as in Fig. \ref{fig:templates}a.
%
%\begin{figure}[!h]
%\centerline{\includegraphics[scale=0.6]{3simplex.pdf}}
%\caption{The model of the $3$-dimensional simplex (tetrahedron). Edges and faces are named by using their vertices: Edges: \texttt{01,02,03,12,13,23}; Faces: \texttt{012,013,023,123}.
%}
%\label{fig:3simplex}
%\end{figure}
%
\begin{figure}[!h]
\centerline{\includegraphics[width=\columnwidth]{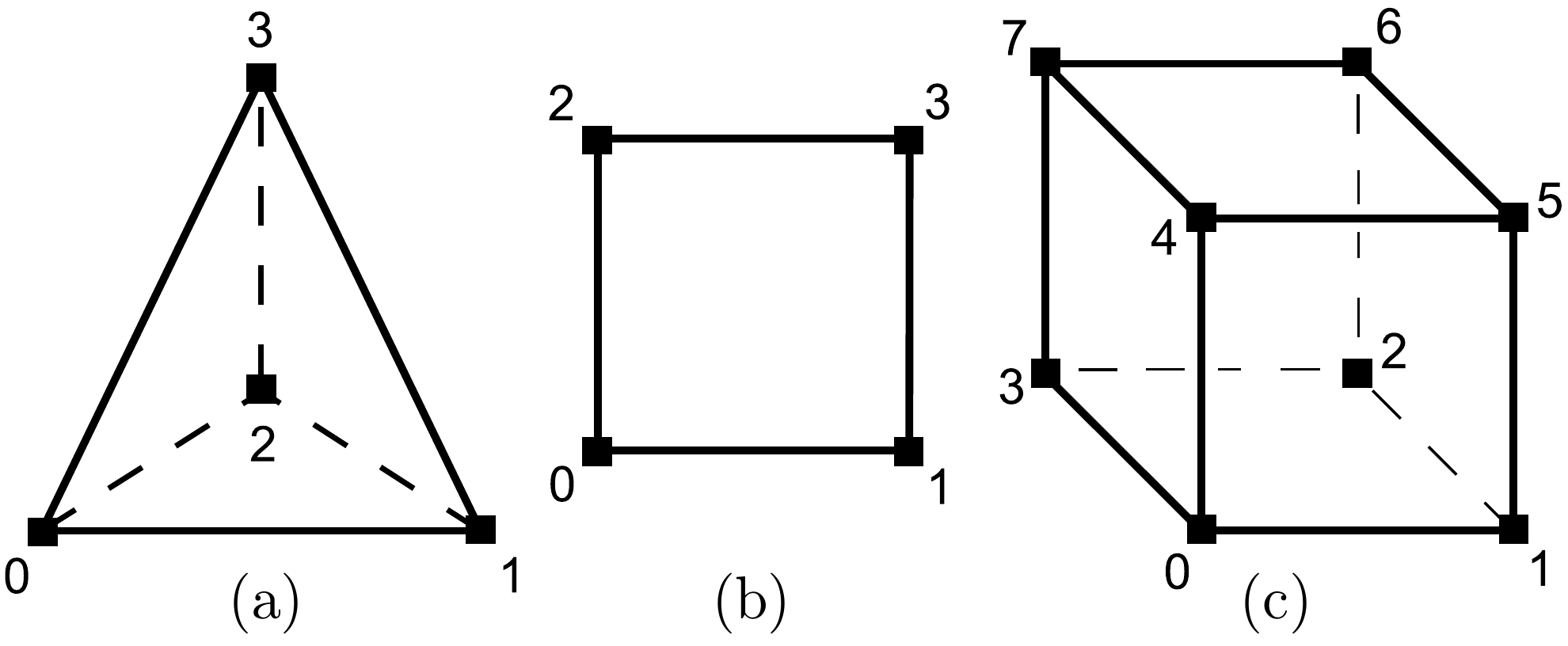}}
\caption{The model of the (a) $3$-dimensional simplex (tetrahedron of edges \texttt{01,02,03,12,13,23} and faces \texttt{012,013,023,123}), the (b) $2$-dimensional cube (pixel of edges \texttt{01,02,13,23}) the (c) $3$-dimensional cube (voxel of edges \texttt{01,03,04,12,15,23,26,37,45,47,56,67} and faces \texttt{0123,0145,0347,1256,2367,4567}). %Edges and faces are named by using their vertices: Edges \texttt{01,02,03,12,13,23}; Faces \texttt{012,013,023,123}. The model of the $3$-dimensional cube (voxel). Edges and faces are named by using their vertices. Edges: \texttt{01,03,04,12,15,23,26,37,45,47,56,67}. Faces: \texttt{0123,0145,0347,1256,2367,4567}.
}
\label{fig:templates}
\end{figure}
We now introduce an ordering on boundary elements of the 3-dimensional simplex (bold numbers are the indexes of elements in the given order):
\begin{equation}
\begin{array}{ccccccc}
\textbf{1}&\textbf{2}&\textbf{3}&\textbf{4}&\textbf{5}&\textbf{6}&\textbf{7}\\
\texttt{0}&\texttt{1}&\texttt{2}&\texttt{3}&\texttt{01}&\texttt{02}&\texttt{03}\\
\textbf{8}&\textbf{9}&\textbf{10}&\textbf{11}&\textbf{12}&\textbf{13}&\textbf{14}\\
\texttt{12}&\texttt{13}&\texttt{23}&\texttt{012}&\texttt{013}&\texttt{023}&\texttt{123}.
\end{array}
\label{table:3simplex}
\end{equation}
%
%The index of a configuration in the acyclicity tables is computed as follows.
Let $l_1,\ldots,l_k$ be the indexes of elements in the considered configuration (i.e. bold numbers corresponding to elements that are present in the configuration). The index of the configuration in the acyclicity table is computed with\[ index := \sum_{i=1}^k 2^{l_i}.\]

The acyclicity table is automatically generated in advance as follows. All possible configurations of the elements are automatically generated and the homology group of each of these configurations is computed with the~\cite{capd} software. If a configuration turns out to be acyclic, then a \emph{true} is set to the place in the array corresponding to the examined configuration, \emph{false} otherwise.

\begin{example}
%The example in the following shows how the presented procedure works in practice.
Suppose the 3-dimensional simplex $[4,5,19,20]$ is given as input. Let $[4,5,19]$ and $[20]$ be the maximal elements in the configuration (i.e. the configuration consists of those elements and the vertices $[4],[5],[19]$, edges $[4,5]$, $[4,19]$, $[5,19]$ that are the faces of $[4,5,19]$). This configuration needs to be mapped into the 3-dimensional simplex model presented in Fig.~\ref{fig:templates}a. Hence, we have the following mapping between vertices: $4 \rightarrow$ \texttt{0}, $5 \rightarrow$ \texttt{1}, $19 \rightarrow$ \texttt{2}, $20 \rightarrow$ \texttt{3}. It is naturally extended to the mapping on simplices. Namely, the triangle $[4,5,19]$ is mapped to $[$\texttt{0},\texttt{1},\texttt{2}$]$ in the 3-dimensional simplex model, whereas vertex $[20]$ is mapped to vertex $[$\texttt{3}$]$. Therefore, the elements in this configuration are:
\begin{enumerate}
\item Vertices: $[$\texttt{0}$]$, $[$\texttt{1}$]$, $[$\texttt{2}$]$, $[$\texttt{3}$]$ (indices 1, 2, 3, 4);
\item Edges: $[$\texttt{0},\texttt{1}$]$, $[$\texttt{0},\texttt{2}$]$, $[$\texttt{1},\texttt{2}$]$ (indices 5, 6, 8);
\item Face: $[$\texttt{0},\texttt{1},\texttt{2}$]$ (index 11).
\end{enumerate}
Consequently, the index of this configuration is
\[ index = 2^1 + 2^2 + 2^3 +2^4 + 2^5 + 2^6 + 2^8 + 2^{11} =  2430.\]
One may check, at this position of the provided acyclicity table for 3-dimensional simplices, that this configuration is not acyclic\footnote{Clearly it cannot be, since it has two connected components.}.
\end{example}

%To use the provided acyclicity tables what matters is just the order of elements in Table~\eqref{table:3simplex}. Once this order is fixed, a unique number is associated to each configuration. The orders that we considered for 2- and 4-dimensional simplices is provided in the following.
In the same spirit, we introduce an ordering for the 2-dimensional simplex
\begin{equation}
\begin{array}{cccccc}
\textbf{1}&\textbf{2}&\textbf{3}&\textbf{4}&\textbf{5}&\textbf{6}\\
\texttt{0}&\texttt{1}&\texttt{2}&\texttt{01}&\texttt{02}&\texttt{12},
\end{array}
\label{table:2simplex}
\end{equation}
and for the 4-dimensional simplex
\begin{equation}
\begin{array}{cccccc}
\textbf{1}&\textbf{2}&\textbf{3}&\textbf{4}&\textbf{5}&\textbf{6}\\
\texttt{0}&\texttt{1}&\texttt{2}&\texttt{3}&\texttt{4}&\texttt{01}\\
\textbf{7}&\textbf{8}&\textbf{9}&\textbf{10}&\textbf{11}&\textbf{12}\\
\texttt{02}&\texttt{03}&\texttt{04}&\texttt{12}&\texttt{13}&\texttt{14}\\
\textbf{13}&\textbf{14}&\textbf{15}&\textbf{16}&\textbf{17}&\textbf{18}\\
\texttt{23}&\texttt{24}&\texttt{34}&\texttt{012}&\texttt{013}&\texttt{014}\\
\textbf{19}&\textbf{20}&\textbf{21}&\textbf{22}&\textbf{23}&\textbf{24}\\
\texttt{023}&\texttt{024}&\texttt{034}&\texttt{123}&\texttt{124}&\texttt{134}\\
\textbf{25}&\textbf{26}&\textbf{27}&\textbf{28}&\textbf{29}&\textbf{30}\\
\texttt{234}&\texttt{0123}&\texttt{0124}&\texttt{0134}&\texttt{0234}&\texttt{1234}.\\
\end{array}
\label{table:4simplex}
\end{equation}

In the case of cubes, unlike the case of simplices, the model cube is expressly needed to specify the location of vertices in the cube\footnote{This happens because in a cube not all vertices are connected with edges as in case of simplices. Therefore, the model cube is needed to point out the incidences of the vertices.}. The model for $2$- and $3$-dimensional cubes is represented in Fig.s \ref{fig:templates}b and \ref{fig:templates}c. %, together with the order of boundary elements that we assumed when computing the index in the acyclicity tables.
%\begin{figure}[!h]
%\centerline{\includegraphics[scale=0.6]{2cube.pdf}}
%\caption{The model of the $2$-dimensional cube (pixel). Edges \texttt{01,02,13,23} are named by using their endpoints.}
%\label{fig:2cube}
%\end{figure}
%
The ordering for the 2-dimensional cube is
\begin{equation}
\begin{array}{cccccccc}
\textbf{1}&\textbf{2}&\textbf{3}&\textbf{4}&\textbf{5}&\textbf{6}&\textbf{7}&\textbf{8}\\
\texttt{0}&\texttt{1}&\texttt{2}&\texttt{3}&\texttt{01}&\texttt{02}&\texttt{13}&\texttt{23},\\
\end{array}
\label{table:2cube}
\end{equation}
%
%\begin{figure}[!h]
%\centerline{\includegraphics[scale=0.6]{3cube.pdf}}
%\caption{The model of the $3$-dimensional cube (voxel). Edges and faces are named by using their vertices. Edges: \texttt{01,03,04,12,15,23,26,37,45,47,56,67}. Faces: \texttt{0123,0145,0347,1256,2367,4567}.}
%\label{fig:3cube}
%\end{figure}
%0 1 2 3 4 5 6 7 01 03 04 12 15 23 26 37 45 47 56 67 0123 0145 0347 1256 2367 4567
%
whereas, for the 3-dimensional cube (voxel) is
\begin{equation}
\begin{array}{cccccc}
\textbf{1}&\textbf{2}&\textbf{3}&\textbf{4}&\textbf{5}&\textbf{6}\\
\texttt{0}&\texttt{1}&\texttt{2}&\texttt{3}&\texttt{4}&\texttt{5}\\
\textbf{7}&\textbf{8}&\textbf{9}&\textbf{10}&\textbf{11}&\textbf{12}\\
\texttt{6}&\texttt{7}&\texttt{01}&\texttt{03}&\texttt{04}&\texttt{12}\\
\textbf{13}&\textbf{14}&\textbf{15}&\textbf{16}&\textbf{17}&\textbf{18}\\
\texttt{15}&\texttt{23}&\texttt{26}&\texttt{37}&\texttt{45}&\texttt{47}\\
\textbf{19}&\textbf{20}&\textbf{21}&\textbf{22}&\textbf{23}&\textbf{24}\\
\texttt{56}&\texttt{67}&\texttt{0123}&\texttt{0145}&\texttt{0347}&\texttt{1256}\\
\textbf{25}&\textbf{26}&&&&\\
\texttt{2367}&\texttt{4567}.&&&&\\
\end{array}
\label{table:3cube}
\end{equation}
Of course, in order to compute the index in the acyclicity table, exactly the same procedure as the one described for the 3-dimensional simplex is used.

Historically, the acyclicitiy tables for cubes~\cite{acc} and simplices~\cite{ctic} were introduced in order to speed up homology computations. %Readers are encouraged to consult the papers~\cite{acc} and~\cite{ctic} and references therein for further details.
In this paper we provide an even stronger result. Not only the homology of the initial set and its skeleton is the same, but one can construct a retraction from the initial set to its skeleton. The existence of retraction implies the isomorphism in homology, but the existence of retraction is a stronger property than homology preservation. We demonstrated the existence of a retraction by a brute-force computer assisted proof, i.e. checking all acyclic configurations. %It has been demonstrated that the retraction exists for every acyclic configuration.
Thus, the following lemma holds.
\begin{lemma}
For every acyclic configuration $C$ in the boundary of 2-, 3- or 4-dimensional simplices and 2- or 3-dimensional cubes (denoted as $bd(K)$) there exist a simple homotopy retraction from $bd(K) \setminus C$ to $C$.
\end{lemma}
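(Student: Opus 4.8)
The plan is to turn the lemma into a finite, purely local claim and then to settle it by an exhaustive computer-assisted check.

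\textbf{Reduction to a local statement.} In the thinning algorithm the complex under consideration always has the form $X = X' \cup K$, where $K$ is the closure of a single top-dimensional cell and $C := X' \cap K$ is the configuration, a subcomplex of $bd(K)$. The first move is to reduce everything to the pair $(K,C)$: if for every acyclic configuration $C$ one can produce a deformation retraction of the closed cell $K$ onto $C$ that fixes $C$ pointwise --- which is what the retraction $bd(K)\setminus C \to C$ in the statement encodes --- then this retraction glues with the identity on $X'$ (immediately by the pasting lemma, or, if one only has a non-relative version to begin with, via the homotopy extension property of the CW pair $(K,C)$) to a deformation retraction of $X$ onto $X'$; composing the retractions obtained for the successively removed cells then produces a retraction of the original complex onto its skeleton. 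What makes this reduction --- and with it the whole tabulation strategy --- legitimate is that the local retraction depends only on which subset of $bd(K)$ the configuration $C$ is, and not at all on the rest of $X$. So the lemma is reduced to: for each of the five cell types and each acyclic $C \subseteq bd(K)$, exhibit a simple homotopy retraction onto $C$.

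\textbf{The finite verification.} There are only $2^{6}$, $2^{14}$, $2^{30}$, $2^{8}$ and $2^{26}$ subsets of $bd(K)$ for the triangle, tetrahedron, $4$-simplex, pixel and voxel respectively, so the statement is finite and can be attacked by brute force. I would enumerate every configuration $C$, decide acyclicity by forming the simplicial or cubical chain complex of $C$ and computing its homology with the \cite{capd} software --- exactly as the acyclicity tables themselves are produced --- and, for each acyclic $C$, run a backtracking search for an explicit certificate: a sequence of elementary collapses (removals of free faces), interleaved if necessary with elementary expansions, carrying $K$ down to $C$ and never deleting a cell of $C$. Because a composite of elementary expansions and collapses is by definition a simple homotopy equivalence, and because each such move is the identity away from the single free pair it affects, concatenating a successful certificate yields exactly the required simple homotopy retraction; in the case --- which I expect to be generic here --- that no expansions are needed, it is literally a strong deformation retraction. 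The search per configuration is small enough to be run to completion, the more so after quotienting by the automorphism group of $K$.

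\textbf{Where the difficulty lies.} The genuine content of the lemma is that acyclicity is strictly weaker than the conclusion: an acyclic complex need not be simply connected, a contractible complex need not be collapsible (Bing's house with two rooms is the usual warning), and a collapse, even when it exists, need not be discoverable greedily. So the real work is the empirical verification --- which is precisely why a brute-force computer-assisted argument is used rather than a slick proof --- that none of these pathologies actually occurs inside $bd(K)$ for any of the five cell types, and that in every acyclic case a collapse/expansion certificate does in fact exist. The second obstacle is scale together with trust: the $4$-simplex alone has more than $10^{9}$ configurations, so the enumeration has to be organized carefully (prune by Euler characteristic and by the number of connected components before computing full homology; exploit the symmetry group of $K$; cache the homology of repeated subcomplexes), and, this being a computer-assisted proof, the implementation must be disciplined enough that ``checked by computer'' genuinely amounts to a proof --- which is exactly the fool-proof-table philosophy the paper is built on.
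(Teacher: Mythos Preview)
Your proposal is correct and matches the paper's own argument: the paper states outright that the lemma is established ``by a brute-force computer assisted proof, i.e.\ checking all acyclic configurations,'' which is exactly the exhaustive enumeration-plus-certificate strategy you describe. You have supplied considerably more detail than the paper does --- the explicit backtracking search for collapse/expansion sequences, the symmetry reductions, the discussion of why acyclicity alone does not imply collapsibility in general --- but the underlying method is the same finite computer verification.
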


At the end of this section, let us define more rigorously a \emph{simple cell}. %Intuitively a cell is simple if its intersection with complex complement in acyclic. The motivation of this definition will be given in the Section~\ref{sec:defOfSkeleton}.
\begin{definition}
A cell $T$ in a complex $\mathcal{K}$ is \textit{simple} if $(bd\ T\ \setminus (T \cap (\mathcal{K} \setminus T) ))$ is acyclic.
\end{definition}

In the supplemental material, we already provide the acyclicity tables for $2, 3, 4$ dimensional simplices and $2, 3$ dimensional cubes (pixels, voxels), in such a way that the reader can safely bypass the step of constructing them. We note that we do not provide tables for higher dimensional simplices or cubes, since the memory required to store them is huge\footnote{All configurations for $4$-dimensional cube require almost $10^9$ PB. All configurations for $5$-dimensional simplex require $4096$ PB. On the contrary, the acyclicity table for the 3-dimensional simplices provided as supplemental material requires no more than $32$kB.}.

\section{Topology preserving thinning algorithm}
\label{sec:topologyPreservingAlgorithm}
In this section we propose a simple thinning technique that iteratively removes simple cells. The algorithm is valid both for cubes and simplices provided that the corresponding acyclicity table is used. We want to point out that the algorithm works on top dimensional cells (cubes, simplices). Therefore---unlike the case of homological algorithms or collapsing---there is no need to generate the whole lower dimensional cell complex data structure\footnote{This structure have to be generated only locally for the boundary of a cell $T$ when checking if $T$ is simple.}. The input of the algorithm consists of a list $\mathcal{K}$ of top dimensional cells in the considered set. The output is a subset of $\mathcal{K}$ being its skeleton.

At the beginning, we present a first version of the algorithm that preserves only the topology of $\mathcal{K}$. %Later, a basic modification that allows to preserve both topology and a shape of an object is described. Again, we stress that the aim of the second algorithm is just to show how to couple topology and shape preservation.
%
%The idea of the algorithm is fairly simple.
At the beginning, one searches the list $\mathcal{K}$ to find all the cells $K_1,\ldots,K_n$ that are simple and store them into a queue $L$. Then, the queue $L$ is processed as long as it is not void. In each iteration, an element $K$ is removed from the queue $L$. Then, with the acyclicity table, one has to check if $K$ is simple in the set $S(\mathcal{K})$. We want to point out that elements already removed form the considered set $S(\mathcal{K})$ in previous iterations are treated as the exterior of $S(\mathcal{K})$ at a given iteration. If $K$ is simple in $S(\mathcal{K})$, then it is removed from the set $S(\mathcal{K})$. In this case, all neighbors of $K$\footnote{A neighbor of cell/simplex $K$ is any cell/simplex $K_1 \in \mathcal{K}$ such that $K \cap K_1 \neq \emptyset$.} that are still in $S(\mathcal{K})$ are added to the queue $L$.
The details of the presented procedure are formalized in Alg.~\ref{alg:thinning}.
\begin{algorithm}[!ht]
  \small
  \caption{Topology preserving thinning.}
  \label{alg:thinning}
  \begin{algorithmic}[1]
  \REQUIRE List of maximal cells $\mathcal{K}$;
  \ENSURE List of maximal cells $S(\mathcal{K})$ that belong to the skeleton of $\mathcal{K}$;
    \STATE Queue $L$;
    \STATE $S(\mathcal{K}) = \mathcal{K}$;
    \FOR{Every element $T \in S(\mathcal{K})$}
        \IF{$bd\ T\ \setminus (T \cap (S(\mathcal{K}) \setminus T) )$ is acyclic (check with acyclic table)}
			\STATE $L.enqueue(T);$\label{alg:first}
        \ENDIF
    \ENDFOR
    \WHILE{ $L \neq \emptyset$ }
		\STATE $T = L.dequeue();$
		\IF{$T \not \in S(\mathcal{K})$}
			\STATE Continue;
		\ENDIF
		\IF{($bd\ T\ \setminus (T \cap (S(\mathcal{K}) \setminus T) ))$ is acyclic}
			\STATE $S(\mathcal{K}) = S(\mathcal{K}) \setminus T$; \label{alg:remove}
			\STATE Put all neighbor cells of $T$ in $S(\mathcal{K})$ to the queue $L$;\label{alg:second}
		\ENDIF
    \ENDWHILE
    \RETURN $S(\mathcal{K})$;
  \end{algorithmic}
  \label{alg:matRed}
\end{algorithm}
We want to stress that Alg.~\ref{alg:thinning} is just an illustration. It may be turned into an efficient implementation by using more efficient data structures (for instance removing
 from the list $S(\mathcal{K})$ can be replaced by a suitable marking the considered element.) Also searching for intersection of $T$ with current $S(\mathcal{K})$ should be performed by using
 hash tables that, for the sake of clarity, are not used explicitly in Alg.~\ref{alg:thinning}. Let us now discuss the complexity of the algorithm. Clearly the \emph{for} loop
 requires $O(\cardd{\mathcal{K}})$ operations. We assume that one can set and check a flag of every cell in a constant time. This flag indicates if a cell is removed from $S(\mathcal{K})$ or not.
 Every cell $T \in \mathcal{K}$ appears in the \emph{while} loop only $k$ times, where $k$ is maximal number of neighbors of a top dimensional cell in the complex. Therefore, the \emph{while} loop
 performs at most $k \cardd{\mathcal{K}}$ iterations before its termination. The time complexity of every iteration is $O(k)$, which means that the overall complexity of the procedure is $O(k^2 \cardd{\mathcal{K}})$.
 Typically the number $k$ is a dimension dependent constant and, in this case, the complexity of the algorithm is $O(\cardd{\mathcal{K}})$. The same complexity analysis is valid for Alg.~\ref{alg:matRedShapePreserving}. % therefore it will not be reproduced later.

We now present in Alg.~\ref{alg:matRedShapePreserving} a simple idea that enables to preserve the shape of the object in addition to its topology. %It requires a minor modification of Alg.~\ref{alg:matRed} that is presented in Alg.~\ref{alg:matRedShapePreserving}.
We stress that the aim of this second algorithm is just to show how to couple topology and shape preservation.
\begin{algorithm}[!ht]
  \small
  \caption{Shape and topology preserving thinning.}
  \begin{algorithmic}[1]
  \REQUIRE List of maximal cells $\mathcal{K}$;
  \ENSURE List of maximal cells $S(\mathcal{K})$ that belong to the skeleton of $\mathcal{K}$;
    \STATE Queue $L$;
    \STATE $S(\mathcal{K}) = \mathcal{K}$;
    \FOR{Every element $T \in S(\mathcal{K})$}
        \IF{$bd\ T\ \setminus (T \cap (S(\mathcal{K}) \setminus T) )$ is acyclic (check with acyclic table)}
			\STATE $L.enqueue(T);$
        \ENDIF
    \ENDFOR
    \STATE Queue $K$;
    \WHILE{ $L \neq \emptyset$ }
		\STATE $T = L.dequeue();$
		\IF{$T \in S(\mathcal{K})$}
			\IF{($bd\ T\ \setminus (T \cap (S(\mathcal{K}) \setminus T) ))$ is acyclic}\label{acyclicity}
				\STATE $S(\mathcal{K}) = S(\mathcal{K}) \setminus T$;
			\ENDIF
            \STATE Put all the neighbor cells of $T$ in $S(\mathcal{K})$ to the queue $K$;
		\ENDIF
		\IF{ $L = \emptyset$ }
			\STATE $L = K$;
			\STATE $K = \emptyset$;
			\IF{all the cells in $S(\mathcal{K})$ have a top dimensional face in external boundary}\label{shapePreservingCodition}
				\STATE \emph{Break};
			\ENDIF
		\ENDIF
    \ENDWHILE
    \RETURN $S(\mathcal{K})$;
  \end{algorithmic}
  \label{alg:matRedShapePreserving}
\end{algorithm}
In Alg.~\ref{alg:matRedShapePreserving} there is one basic difference with respect to Alg.~\ref{alg:matRed}. In Alg.~\ref{alg:matRedShapePreserving}, after removing a single external layer of cells, a check is made at line~\ref{shapePreservingCodition} to determine whether all cells that remain in $S(\mathcal{K})$ are already in the boundary of $S(\mathcal{K})$. Once they are, the thinning process terminates. The topology is still preserved due to line~\ref{acyclicity}. %and we have simply added another constraint that roughly preserves the shape of $\mathcal{K}$.
The additional constraint used at line~\ref{shapePreservingCodition} of Alg.~\ref{alg:matRedShapePreserving} is very simple and it gives acceptable results in practice. It may be easily coupled with other techniques to preserve shape already described in literature.

Finally, we discuss the situation when one wants to keep the skeleton attached to some pieces of the external boundary $B$ of the mesh. In this case, when testing whether a top dimensional cell $T$ is simple, one should consider $B \cap bd\ T$ as elements in $S(K)$. In other words, elements from $B$ are not considered as an interface between the object to skeletonize and its exterior. %This techniques is used for instance in example~\ref{??}.

%looking at local properties of cells in $S(\mathcal{K})$ around the considered cell $T$ before removing $T$ or by other techniques already described in literature. %However, it is not the aim of this paper to provide a universal shape-preserving thinning algorithm. For any application that requires preservation of shape (together with topology), a more suitable criterion should be used instead. This is going to be addressed in the future.

\subsection{Proofs}
\label{sec:defOfSkeleton}

Now we are ready to give a formal definition of skeleton.
\begin{definition}
Let us have a simplicial or cubical complex $\mathcal{K}$. A \textit{skeleton} of $\mathcal{K}$, denoted by $S(\mathcal{K})$, is a set of top dimensional simplices or cubes such that:
\begin{enumerate}
\item $S(\mathcal{K})$ is obtained from $\mathcal{K}$ by iteratively removing top dimensional elements $T_1,\ldots,T_n$, provided that the intersection of $T_i$ with $\mathcal{K} \setminus \bigcup_{j=1}^{i-1} T_j$ complement is acyclic. Consequently, homology groups of $S(\mathcal{K})$ and $\mathcal{K}$ are isomorphic;
\item There is no top dimensional element $T \in S(\mathcal{K})$ that has an acyclic intersection with $S(\mathcal{K})$ complement (i.e. the process of removing such elements has been run as long as possible.)
\end{enumerate}
\label{def:skeleton}
\end{definition}
We want to point out that sometimes, due to some deep phenomena arising in simple homotopy theory, some skeleton may be redundant. For instance it is possible to have a skeleton of a 3-dimensional ball that is a Bing's house~\cite{bing} instead being a single top dimensional element. In general it is impossible to avoid this issue due to some intractable problems in topology.

%We want to clarify some issues:
%\begin{enumerate}
%\item In nontrivial situations, the skeleton is not unique. Any subset of a given object for which the above definition holds will be called a skeleton.
%\item The presented definition define a topological skeleton. Such a skeletons does not preserves \textit{shape} of the considered object, but just its topology. One may use the idea presented in Algorithm~\ref{alg:matRedShapePreserving} to preserve both topology and shape of an object.
%\item Sometimes, due to some phenomena arising in simple homotopy theory (see~\cite{cohen}), some skeletons may be redundant. For instance it is possible to have a skeleton of a three dimensional ball, that is a Bing's house~\cite{bing} instead being a single top dimensional element. In general it is impossible to avoid this issue due to some intractable problems in topology.
%\end{enumerate}

%In the following, we show that the output of Algorithm~\ref{alg:thinning} fulfill the Definition~\ref{def:skeleton}. In particular, we will show that homology groups of $\mathcal{K}$ and $S(\mathcal{K})$ (being output of Algorithm~\ref{alg:thinning}) are isomorphic.

%\subsection{Proofs}\label{sec:proofs}
In the follwing, we formally show that the skeleton obtained from Alg.~\ref{alg:thinning} satisfies Def.~\ref{def:skeleton}. This fact is shown with a sequence of two simple lemmas. %A basic knowledge of algebraic topology is required. %We suggest~\cite{hatcher} as a good introductory textbook on this topic.
\begin{lemma}
The homology of $\mathcal{K}$ and $S(\mathcal{K})$ are isomorphic.
\end{lemma}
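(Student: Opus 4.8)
The plan is to proceed by induction on the number of top dimensional cells removed by Algorithm~\ref{alg:thinning}, using the Mayer--Vietoris sequence as the single structural ingredient. Let $T_1, T_2, \ldots, T_n$ be the cells removed by the algorithm, in the order in which the lines~\ref{alg:remove} are executed, and set $\mathcal{K}_0 = \mathcal{K}$ and $\mathcal{K}_i = \mathcal{K}_{i-1} \setminus T_i$, so that $\mathcal{K}_n = S(\mathcal{K})$. The claim reduces to showing that $H_*(\mathcal{K}_{i-1}) \cong H_*(\mathcal{K}_i)$ for each $i$, since a composition of isomorphisms is an isomorphism.

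First I would fix $i$ and write $X = \mathcal{K}_{i-1}$, $V = \overline{T_i}$ (the closed cell, i.e. $T_i$ together with all of its faces), and $X' = \mathcal{K}_i$. Observe that $X = X' \cup V$, because removing the top dimensional cell $T_i$ from the list of maximal cells only deletes $T_i$ itself and those of its faces that are not faces of any other surviving maximal cell; all the remaining faces of $V$, namely exactly $V \cap (X \setminus T_i) = V \cap X'$, are still in $X'$. Then $X' \cap V = bd\,T_i \setminus (T_i \cap (X \setminus T_i))$ in the notation of the algorithm, up to the obvious identification of the intersection with the corresponding subcomplex of $bd\,V$. The key point is that line~\ref{acyclicity} (equivalently the \textbf{if} guard before line~\ref{alg:remove}) was satisfied when $T_i$ was removed, so by definition $X' \cap V$ is acyclic, i.e. $\widetilde{H}_*(X' \cap V) = 0$.

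Next I would invoke the reduced Mayer--Vietoris long exact sequence for the decomposition $X = X' \cup V$ with intersection $X' \cap V$:
\[
\cdots \to \widetilde{H}_k(X' \cap V) \to \widetilde{H}_k(X') \oplus \widetilde{H}_k(V) \to \widetilde{H}_k(X) \to \widetilde{H}_{k-1}(X' \cap V) \to \cdots
\]
Since $V$ is a simplex or a cube it is contractible, so $\widetilde{H}_*(V) = 0$; and since $X' \cap V$ is acyclic, $\widetilde{H}_*(X' \cap V) = 0$ as well. Feeding these two vanishings into the sequence, the maps $\widetilde{H}_k(X') \to \widetilde{H}_k(X)$ are sandwiched between zero groups on both sides, hence are isomorphisms for all $k$. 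Therefore $H_*(X') \cong H_*(X)$, i.e. $H_*(\mathcal{K}_i) \cong H_*(\mathcal{K}_{i-1})$, which closes the induction and gives $H_*(S(\mathcal{K})) \cong H_*(\mathcal{K})$.

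I expect the only real subtlety — and the step I would be most careful about — to be the bookkeeping that establishes $X = X' \cup V$ and that $X' \cap V$ is precisely the configuration tested by the algorithm; one must be sure that the algorithm, which stores only maximal cells, correctly realizes the set-theoretic union of closed cells, and that ``treating already-removed cells as exterior'' matches the subcomplex $X' \cap V$ appearing in Mayer--Vietoris. Also worth a sentence is why Mayer--Vietoris applies here at all: $X'$ and $V$ are subcomplexes of a CW (cellular) complex whose union is $X$, so one may use the simplicial/cellular version of Mayer--Vietoris directly, with no point-set excision hypotheses to verify. The homology computed ``as the homology of the corresponding chain complex'' in the definition of acyclic configuration is exactly simplicial (resp. cubical) homology, so everything is consistent. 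No constraint from Definition~\ref{def:skeleton}(2) is needed for this lemma; that clause is only relevant to the companion lemma on termination/maximality.
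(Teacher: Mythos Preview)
Your overall strategy---induction on the removed cells together with the reduced Mayer--Vietoris sequence for $X=X'\cup V$---is exactly the route the paper takes. However, there is a genuine error in the bookkeeping step you yourself flagged as delicate.

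You write $X'\cap V = bd\,T_i \setminus (T_i\cap(X\setminus T_i))$. This is backwards. With $X=\mathcal{K}_{i-1}$ and $X'=\mathcal{K}_i=X\setminus T_i$, one has $T_i\cap(X\setminus T_i)=T_i\cap X'=X'\cap V$ (the faces of $T_i$ shared with the surviving complex). What the algorithm actually tests for acyclicity is the \emph{complement} of this set inside $bd\,T_i$, namely $bd\,T_i\setminus(X'\cap V)$, i.e. the part of $bd\,T_i$ that meets the exterior of $S(\mathcal{K})$. So the hypothesis you get from line~\ref{alg:remove} is that $bd\,V\setminus(X'\cap V)$ is acyclic, not that $X'\cap V$ is.

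This means a bridging step is required before Mayer--Vietoris applies: from acyclicity of the configuration $C=bd\,V\setminus(X'\cap V)$ one must deduce acyclicity of $bd\,V\setminus C = X'\cap V$. The paper handles this in one sentence (``Once it is, also $(\mathcal{K}\setminus\bigcup_{j=1}^{i}T_j)\cap T_i$ is acyclic''), implicitly relying on the computer-verified Lemma~3.1: for every acyclic configuration $C$ in the boundary of the relevant cells there is a simple-homotopy retraction relating $C$ and $bd\,V\setminus C$, so one is acyclic iff the other is. Once you insert this step your argument is complete and matches the paper's.
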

\begin{proof}
The proof of this lemma is a direct consequence of the Mayer--Vietoris sequence~\cite{hatcher}. Let $T_1,\ldots,T_n$ be the elements removed during the course of the algorithm (enumeration is given by the order they were removed by the algorithm.)
Let us show that, for every $i \in \{1,\ldots,n\}$, homology of $\mathcal{K} \setminus \bigcup_{j=1}^{i-1} T_j$ and homology of $\mathcal{K} \setminus \bigcup_{j=1}^{i} T_j$\footnote{The difference used in the formulas in this proof is not a set theoretic difference. All the objects are assumed to contain all their faces.} are isomorphic. Let us write the Mayer--Vietoris sequence in reduced homology for $ \mathcal{K} \setminus \bigcup_{j=1}^{i-1} T_j = (\mathcal{K} \setminus \bigcup_{j=1}^{i} T_j) \cup T_i$:
\[ \ldots \rightarrow
\overline{H}_n( (\mathcal{K} \setminus \bigcup_{j=1}^{i} T_j) \cap T_i )
\rightarrow
\]
\[
\rightarrow
\overline{H}_n( \mathcal{K} \setminus \bigcup_{j=1}^{i} T_j )
\oplus
\overline{H}_n( T_i )
\rightarrow
%\]
%\[
%\rightarrow
\overline{H}_n( \mathcal{K} \setminus \bigcup_{j=1}^{i-1} T_j )
\rightarrow \ldots. \]
The intersection $(\mathcal{K} \setminus \bigcup_{j=1}^{i} T_j) \cap T_i$ is acyclic. This is because the intersection of $T_i$ with the set complement is checked in the acyclicity tables to be acyclic. Once it is, also $(\mathcal{K} \setminus \bigcup_{j=1}^{i} T_j) \cap T_i$ is acyclic. Therefore, $\overline{H}_n( (\mathcal{K} \setminus \bigcup_{j=1}^{i} T_j) \cap T_i )$ is trivial. Also, since $T_i$ is a simplex or cube, it is acyclic. This provides $\overline{H}_n(T_i)$ being trivial (we are considering reduced homology.) Consequently from the exactness of the presented sequence we have the desired isomorphism between $\overline{H}_n( \mathcal{K} \setminus \bigcup_{j=1}^{i} T_j )$ and $\overline{H}_n( \mathcal{K} \setminus \bigcup_{j=1}^{i-1} T_j ) $. The conclusion follows from a simple induction.
\end{proof}

\begin{lemma}
\label{lem:acIntersection}
After termination of the algorithm there is no element $T \in S(\mathcal{K})$ that has an acyclic intersection with the $S(\mathcal{K})$ complement.
\end{lemma}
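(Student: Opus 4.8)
\subsection*{Proof proposal}

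The plan is to argue by contradiction, following the fate of a hypothetical ``bad'' cell through the queue-based control flow of Algorithm~\ref{alg:thinning}. Suppose that, at termination, there is an element $T \in S(\mathcal{K})$ for which $bd\ T\ \setminus (T \cap (S(\mathcal{K}) \setminus T))$ is acyclic, where $S(\mathcal{K})$ now denotes the \emph{final} set. The first thing I would record is a monotonicity remark: the only place where $S(\mathcal{K})$ is modified after initialization is line~\ref{alg:remove}, which removes elements, so $S(\mathcal{K})$ only shrinks during the run and once a cell leaves it, it never returns. In particular $T$ is present in $S(\mathcal{K})$ at \emph{every} stage of the execution.

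Next I would establish the locality of the simplicity test. For our fixed $T$, the set $T \cap (S \setminus T)$ is the union of $T \cap K_1$ over those neighbors $K_1$ of $T$ that currently lie in $S$; hence the predicate ``$bd\ T \setminus (T \cap (S \setminus T))$ is acyclic'' depends on the current $S$ only through which neighbors of $T$ belong to it. Consequently the value of this predicate for $T$ can change only at an iteration in which some neighbor of $T$ is removed from $S(\mathcal{K})$ at line~\ref{alg:remove} --- and at precisely such an iteration line~\ref{alg:second} enqueues every neighbor of the removed cell that is still in $S(\mathcal{K})$, in particular $T$ (which is still in $S(\mathcal{K})$ by the previous paragraph).

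Now consider the last moment in the run at which the predicate for $T$ could have changed: either the iteration right after the last removal of a neighbor of $T$, or, if no neighbor of $T$ is ever removed, the initial \textbf{for}-loop scan. After that moment the predicate for $T$ is constantly \emph{true}, since it is true at termination and cannot change thereafter. At that moment $T$ was placed into the queue $L$ (by line~\ref{alg:second}, respectively by line~\ref{alg:first}). Because $L$ is FIFO and only finitely many enqueue operations occur --- the bound $O(k\,\cardd{\mathcal{K}})$ on the number of \textbf{while}-iterations was established above, and the loop terminates with $L = \emptyset$, so everything enqueued is eventually dequeued --- the element $T$ is dequeued at some later iteration. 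At that iteration $T \in S(\mathcal{K})$, so the guard $T \notin S(\mathcal{K})$ fails and control reaches the acyclicity check; the check succeeds, so $T$ is removed from $S(\mathcal{K})$ at line~\ref{alg:remove}. This contradicts $T \in S(\mathcal{K})$ at termination.

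I expect the main (and only genuinely delicate) point to be the locality claim in the second paragraph: one must verify carefully that the simplicity predicate for a fixed $T$ depends on nothing but the trace of $T$ on its neighbors currently in $S(\mathcal{K})$, so that it is frozen between consecutive removals of neighbors of $T$ and every change of it is accompanied by a re-enqueueing of $T$ at line~\ref{alg:second}. The rest is routine bookkeeping about the FIFO queue discipline and the fact that $S(\mathcal{K})$ shrinks monotonically.
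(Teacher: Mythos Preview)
Your proposal is correct and follows essentially the same route as the paper: argue by contradiction, locate the last removal of a neighbor of $T$ (or the initial scan if there is none), observe that the simplicity predicate for $T$ is frozen thereafter and equals its terminal value \emph{true}, note that $T$ is enqueued at that moment, and conclude that the subsequent dequeue removes $T$. The paper's proof is terser---it indexes the removed cells $T_1,\ldots,T_n$, lets $i$ be the largest index with $T_i\cap T\neq\emptyset$, and splits into $i=0$ and $i>0$---but your explicit monotonicity and locality remarks just spell out what the paper leaves implicit.
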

\begin{proof}
Let $T_1,\ldots,T_n$ be the elements removed during the course of the algorithm (enumeration is given by the order they were removed by the algorithm.)
Suppose, by contrary, that a $T \in S(\mathcal{K})$ exists such that it has an acyclic intersection with the $S(\mathcal{K})$ complement. Let $i \in \{1,\ldots,n\}$ denotes the index of last element among $T_1,\ldots,T_n$ that has nonempty intersection with $T$.
If $i=0$, then $T$ would be put to the queue $L$ in the line~\ref{alg:first} of Alg.~\ref{alg:thinning} and removed from $S(\mathcal{K})$ in the line~\ref{alg:remove} of the algorithm, since no change to its intersection with $S(\mathcal{K})$ complements is made by removing $T_1,\ldots,T_n$.
If $i>0$, then after removing $T_i$ the intersection of $T$ with $S(\mathcal{K})$ complement does not change. Therefore, it is acyclic after removing $T_j$ for $j \leq i$. When Alg.~\ref{alg:thinning} removes $T_i$ in the line~\ref{alg:second}, $T$ is added to the list $L$ and it is going to be removed in the line~\ref{alg:remove}, since removing $T_j$ for $j > i$ does not affect the acyclicity of the intersection of $T$ with the $S(\mathcal{K})$ complement.
In both cases we showed that $T$ is removed from $S(\mathcal{K})$ by Alg.~\ref{alg:thinning}. Therefore, a contradiction is obtained.
\end{proof}

%\textbf{[I deleted here the section of euler. we keep saying the same things said in the intro.]}

%\subsection{Topology preserving thinning by checking Euler characteristic}\label{sec:euler}
%As mentioned in the Introduction, some number of papers~\cite{Ashwin, abbera} use Euler characteristic in order to check if the topology is preserved. By using our precomputed acyclicity tables for cubes and simplices we have proved, simply by checking all acyclic configurations, that in 2d checking Euler characteristic suffices to preserve topology, whereas in 3d it does not. However, when one checks both Euler characteristic and that the number of connected component before and after cell removal remain one, then topology is preserved (this in 3d follows easily from definition of the Euler characteristic.) As expected, checking Euler characteristic together with connectivity do not suffice to preserve topology in 4d.

\section{Experimental results}
\label{sec:benchmarks}

\subsection{Assessing aortic coarctation and aneurism}
Skeletons can be used in computer-aided diagnostic tools for coarctation and aneurism, by evaluating the transverse areas of any vessel structure, see for example \cite{tmi_kitamura}.

\subsubsection{Aortic coarctation}
\begin{figure}[t]
\centering
\includegraphics[width=\columnwidth]{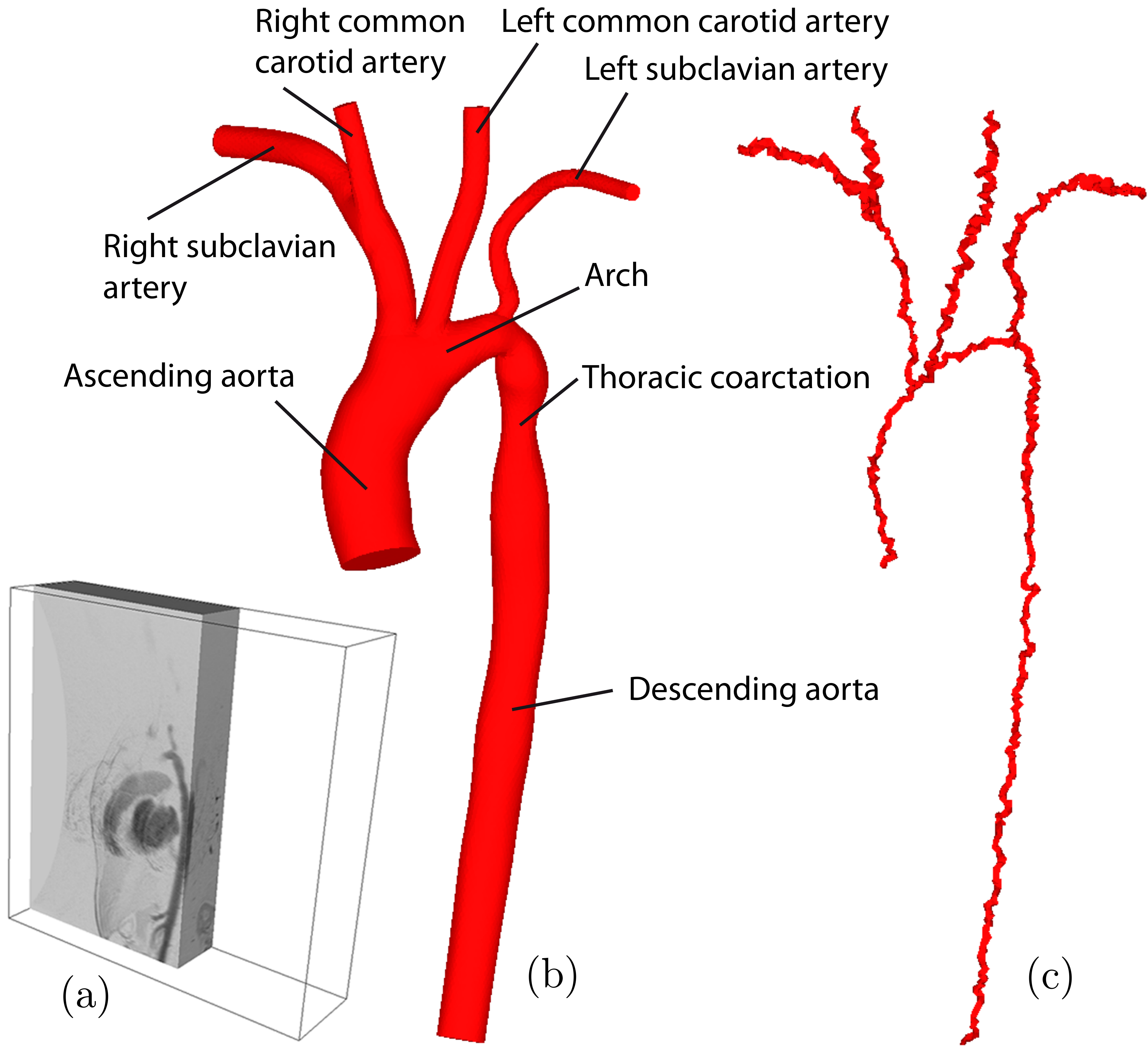}
\caption{(a) Magnetic resonance angiography (MRA) image of a moderate thoracic aortic coarctation. (b) Rendering of the 3D triangulated surface (20922 triangles) that represents the patient-specific thoracic aortic coarctation anatomy obtained by segmenting MRA data and (c) the skeleton extracted with Alg. 2.}\label{ex1}
\end{figure}
Aorta coarctation is a congenital heart defect consisting of a narrowing of a section of the aorta. Surgical or catheter-based treatments seek to alleviate the blood pressure gradient through the coarctation in order to reduce the workload on the heart. The pressure gradient is dependent on the anatomic severity of the coarctation, which can be determined from patient data.
Gadolinium-enhanced magnetic resonance angiography (MRA) has been used in a 8 year old female patient to image a moderate thoracic aortic coarctation, see Fig. \ref{ex1}a. Fig. \ref{ex1}b shows a rendering of the 3D triangulated surface, obtained by segmenting the MRA data, which models the ascending aorta, arch, descending aorta, and upper branch vessels. The interior of the surface has been covered with 94756 tetrahedra. The skeleton of this vessel structure, obtained with Alg. 2, is shown in Fig. \ref{ex1}c.

%\subsubsection{Aortic aneurism}
%\begin{figure}[t]
%\centering
%\includegraphics[width=6cm]{aneurism.pdf}
%\caption{(a) Rendering of the 3D triangulated surface (14200 triangles) representing an aortic aneurism and (b) the skeleton extracted with Alg. 2.}\label{ex2}
%\end{figure}
%Aortic aneurysms are abnormal dilatations of the arterial vessels due to atherosclerosis, see for example Fig. \ref{ex2}a. Screening for an aortic aneurysm, to ensure the detection and treatment prior to rupture, is the best way to reduce the overall mortality of the disease due to massive internal hemorrhage. The interior of the surface has been covered with 67676 tetrahedra. The skeleton of this vessel structure, obtained with Alg. 2, is shown in Fig. \ref{ex2}b.

\subsubsection{Cerebrovascular aneurism}
\begin{figure}[t]
\centering
\includegraphics[width=\columnwidth]{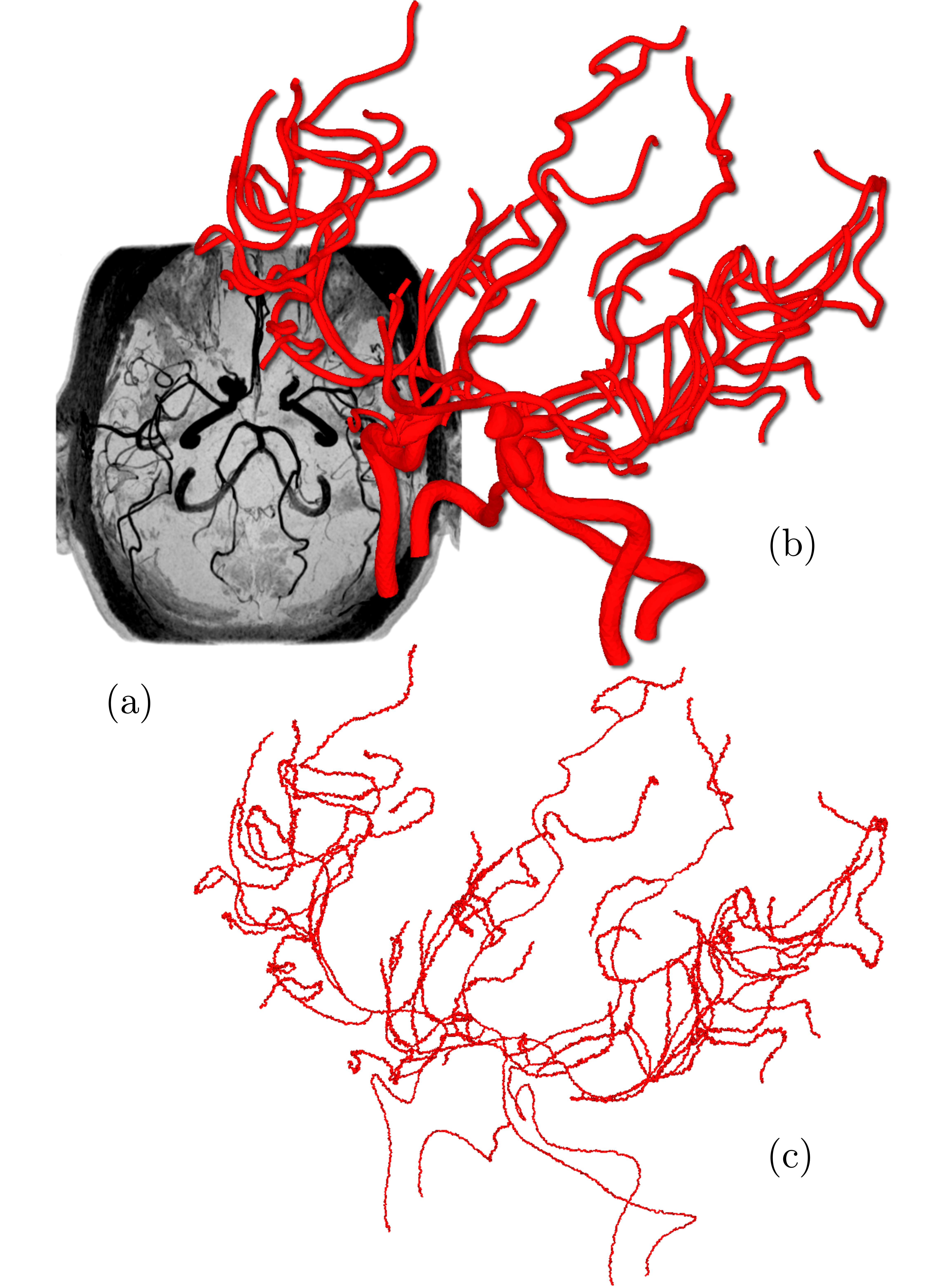}
\caption{(a) Magnetic resonance imaging (MRI) image of the cerebral circulation. (b) Rendering of the 3D triangulated surface (196,056 triangles) representing the cerebral circulation obtained by segmenting the MRI data and (c) the skeleton extracted with Alg. 2.}\label{brain}
\end{figure}
Cerebrovascular aneurysms are abnormal dilatations of an artery that supplies blood to the brain. Magnetic resonance imaging (MRI) has been used
to image the cerebral circulation in a 47 year old female patient, see Fig. \ref{brain}a. Fig. \ref{brain}b shows a rendering of the 3D triangulated surface, obtained from the segmentation of the MRI data. The interior of the surface has been covered with 390,081 tetrahedra. The skeleton of this vessel structure, obtained with Alg. 2, is shown in Fig. \ref{brain}c.

\subsection{Analysis of pulmonary airway trees}
\begin{figure}[t]
\centering
\includegraphics[width=\columnwidth]{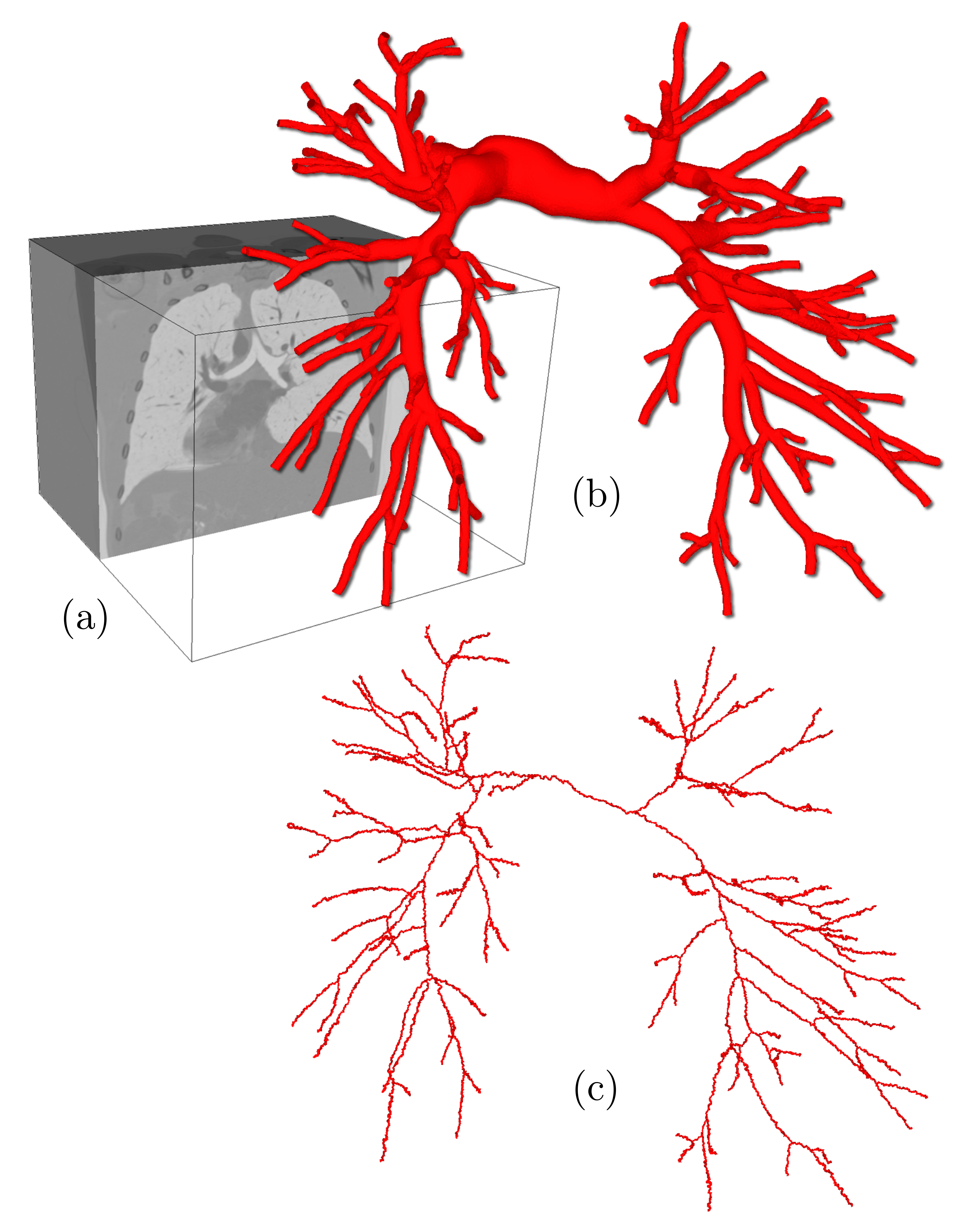}
\caption{(a) Computed tomography (CT) image of the pulmonary airway trees. (b) Rendering of the 3D triangulated surface (71926 triangles) representing the patient-specific pulmonary arteries and (c) the skeleton extracted with Alg. 2.}\label{pulmonary}
\end{figure}
Pulmonary arteries connect blood flow from the heart to the lungs in order to oxygenate blood before being pumped through the body.
Skeletons have been used for quantitative analysis of intrathoracic airway trees in \cite{tmi_pal1}.
A 3D triangulated surface, shown in Fig. \ref{pulmonary}b, represents the 3D model of pulmonary airway trees of a 16 year old male patient obtained by segmenting data from computed tomography (CT) images, see Fig. \ref{pulmonary}a. The interior of the surface is covered with 236,433 tetrahedra. The topology preserving skeleton obtained by Alg. 2 is shown in Fig. \ref{pulmonary}c.

\subsection{Extracting centerline for virtual colonoscopy}
\begin{figure}[t]
\centering
\includegraphics[width=6cm]{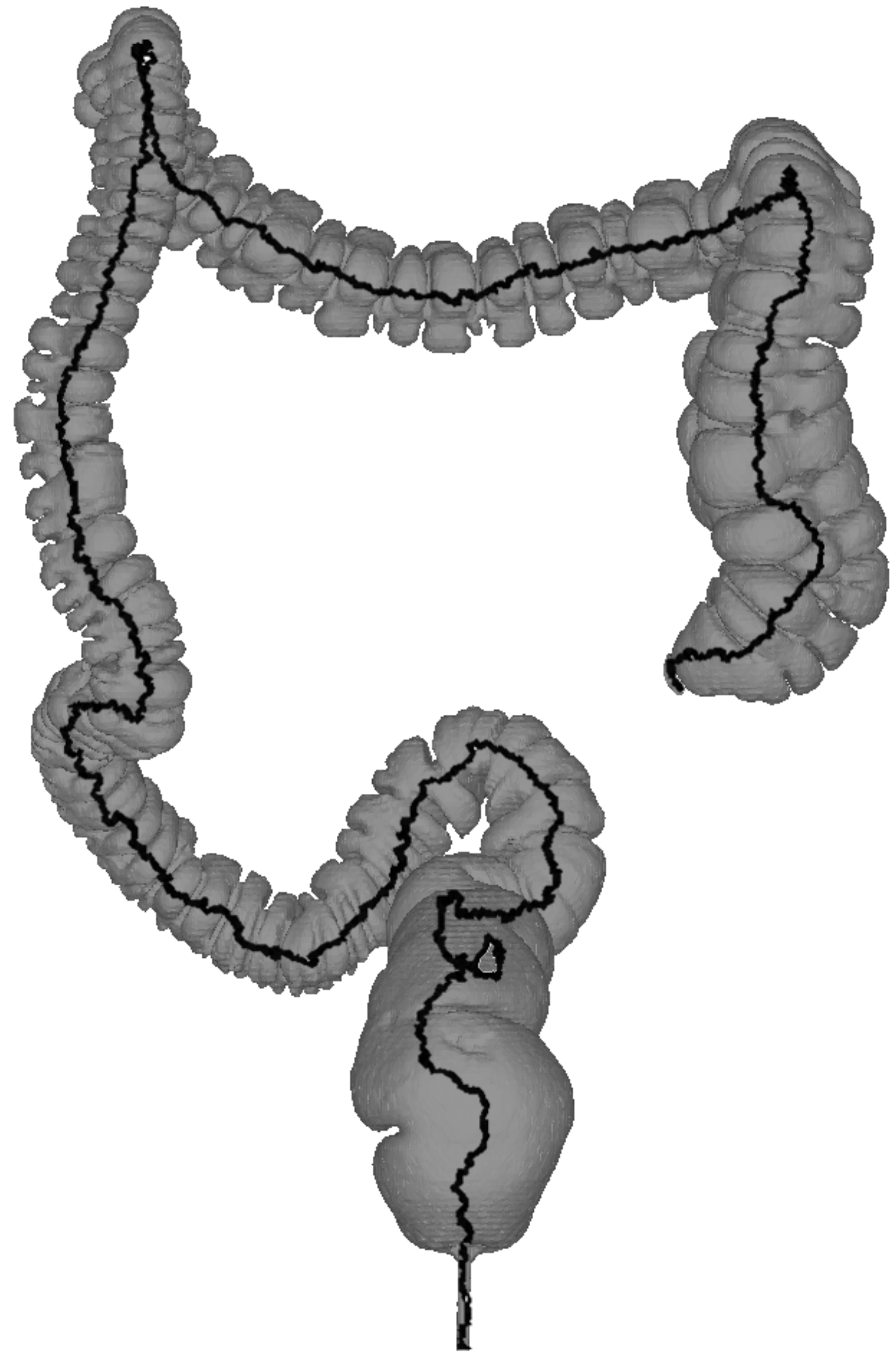}
\caption{Rendering of the 3D triangulated surface (506,188 triangles) representing the patient-specific colon anatomy obtained by segmenting CT colonography data and, in black, the skeleton extracted with Alg. 2.}\label{excolon}
\end{figure}
A 3D triangulated surface that represents the 3D model of a colon is obtained by segmenting data from computed tomography
(CT) images, see Fig. \ref{excolon}. The interior of the surface is covered with 2,108,424 tetrahedra. The topology preserving skeleton obtained by Alg. 2, which may be used as a colon centerline to guide a virtual colonoscopy, is shown in black in Fig. \ref{excolon}.

\subsection{Computing topological interconnections of bone structure}
\begin{figure}[t]
\centering
\includegraphics[width=\columnwidth]{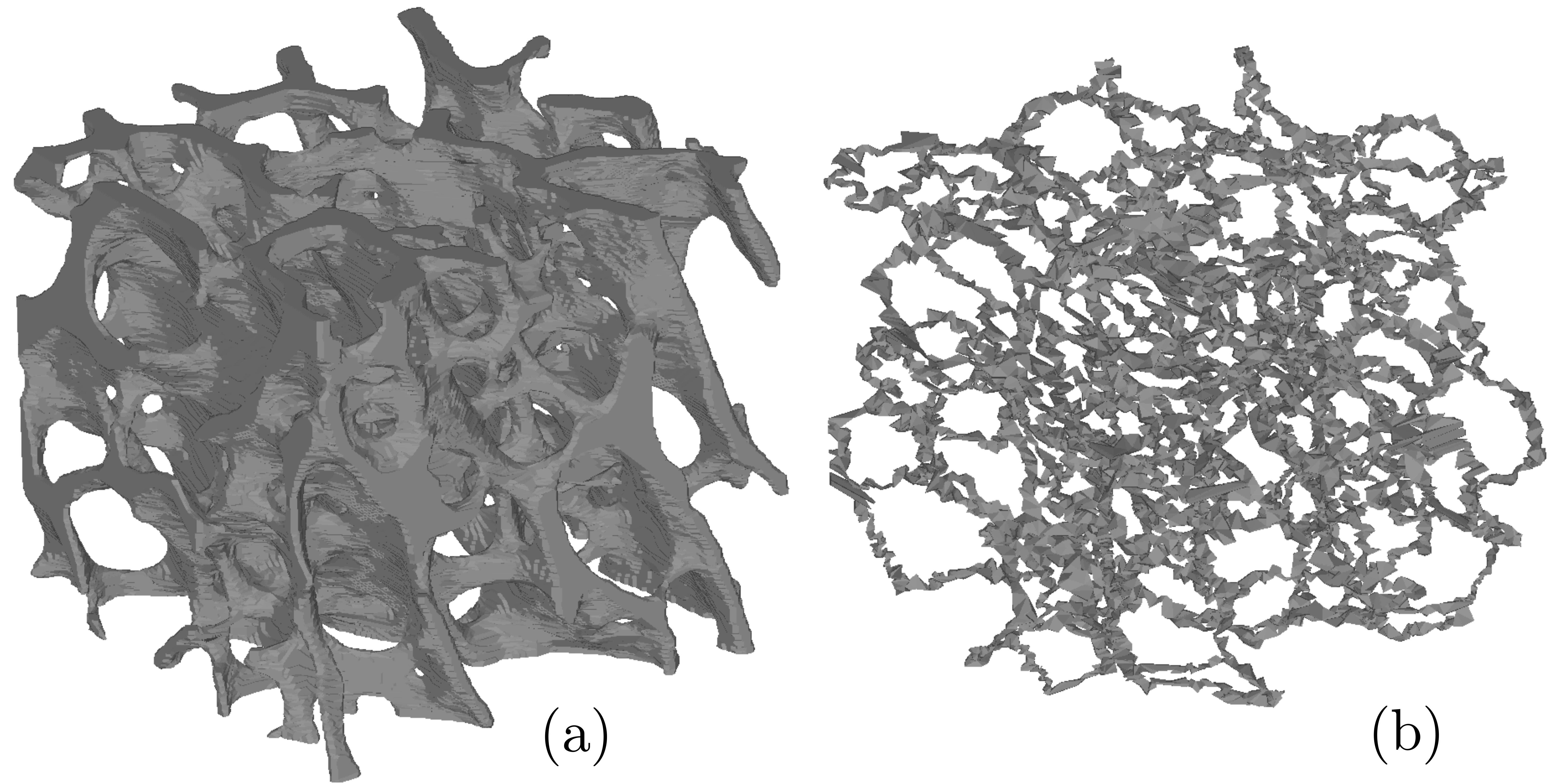}
\caption{(a) Rendering of the 3D triangulated surface (285,346 triangles) representing the anatomy of a human bone in a region of interest of the trabecular region obtained by segmenting MicroCT data and (b) the skeleton extracted with Alg. 1.}\label{exbone}
\end{figure}
A 3D model of a human bone belonging to a 61 year old male patient has been obtained from a stack of thresholded 2D images acquired by X-ray MicroCT scanning \cite{bonemodel}. In particular, a region of interest (ROI) of size $4\,$mm$ \times 4\,$mm ($200 \times 200$ pixels, pixel size $19.48\,\mu$m) is selected in the trabecular region. A stack of 195 2D images has been considered, resulting in a volume of interest (VOI) of approximately $4$mm$ \times 4$mm$ \times 4$mm. From this 3D model, consisting of about 2.4 millions voxels, a 3D triangulated surface has been obtained, see Fig. \ref{exbone}a. The interior of this surface is covered with 688,773 tetrahedra. The topology preserving skeleton obtained by Alg. 1 is shown in Fig. \ref{exbone}b.

\subsection{Some other non medical examples.}
The results of Algorithm \ref{alg:matRed} on some benchmarks are visible in Fig.s \ref{ex1}-\ref{ex14}, whereas the results obtained with Algorithm \ref{alg:matRedShapePreserving} are shown in Fig.s \ref{ex5}-\ref{ex13}.
\begin{figure}[!t]
\centering
\includegraphics[width=3.5in]{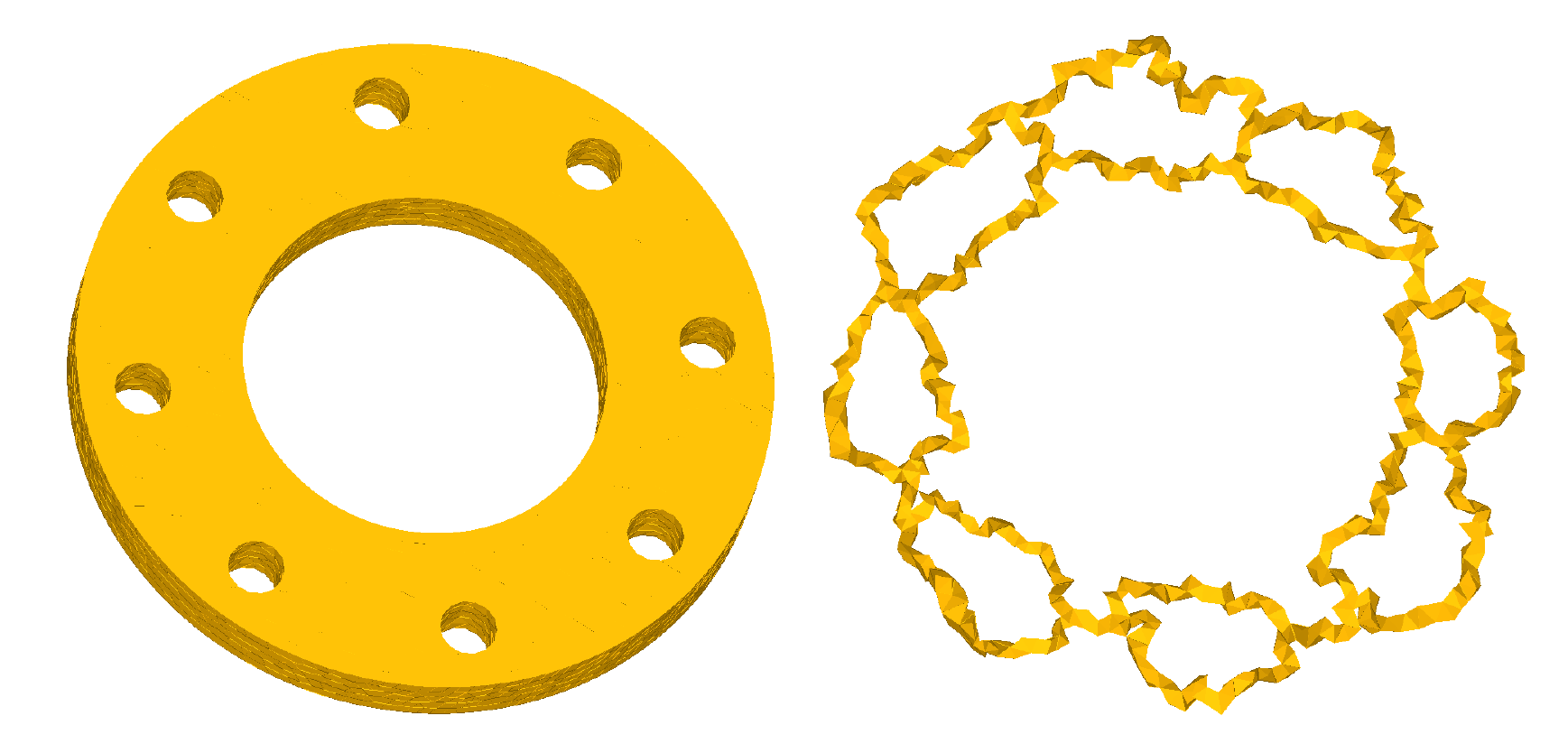}
\caption{A flange and its topology-preserving skeleton.}\label{ex1}
\end{figure}
\begin{figure}[!t]
\centering
\includegraphics[width=3.5in]{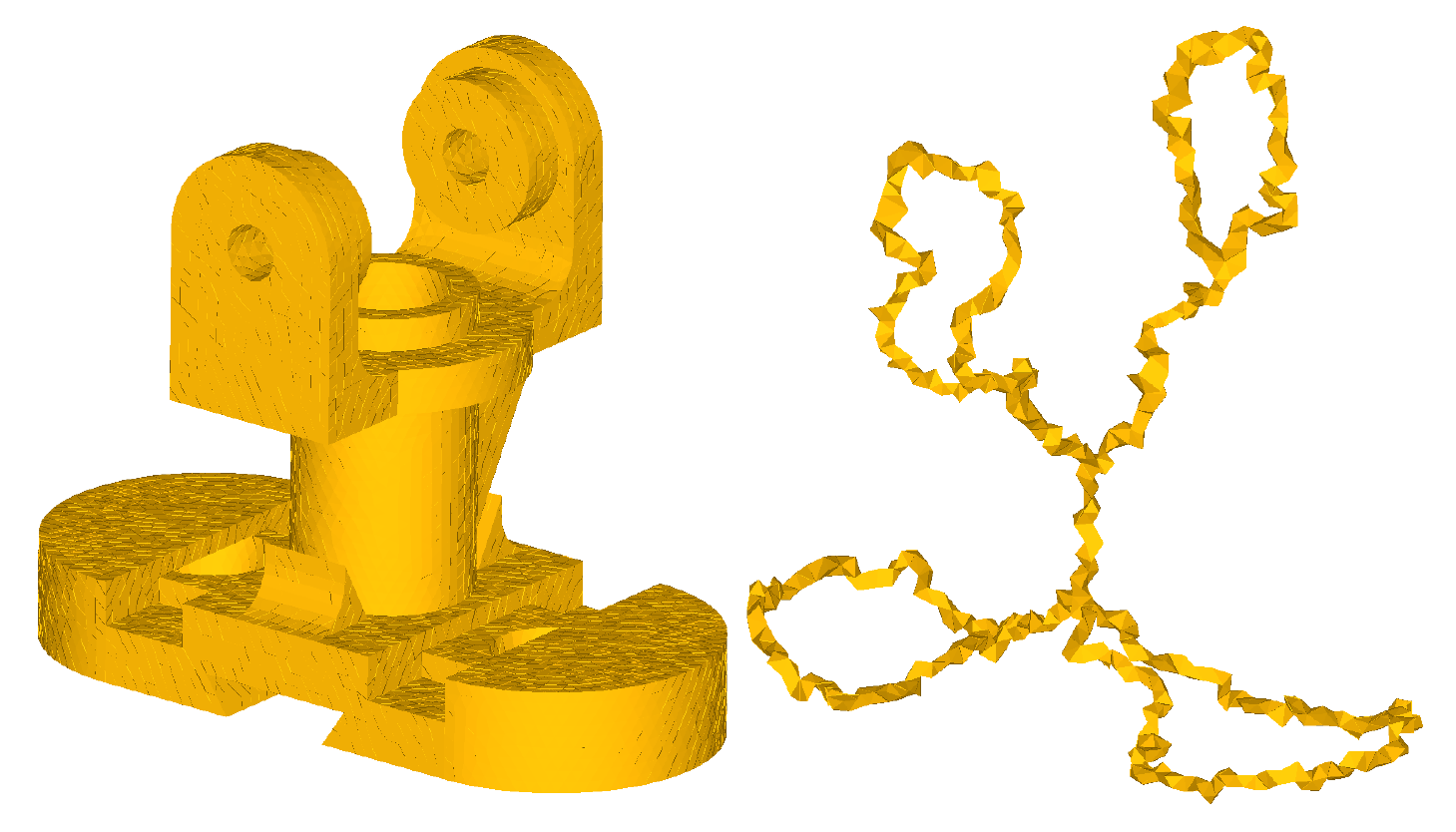}
\caption{A mechanical part and its topology-preserving skeleton.}\label{ex2}
\end{figure}
\begin{figure}[!t]
\centering
\includegraphics[width=3.5in]{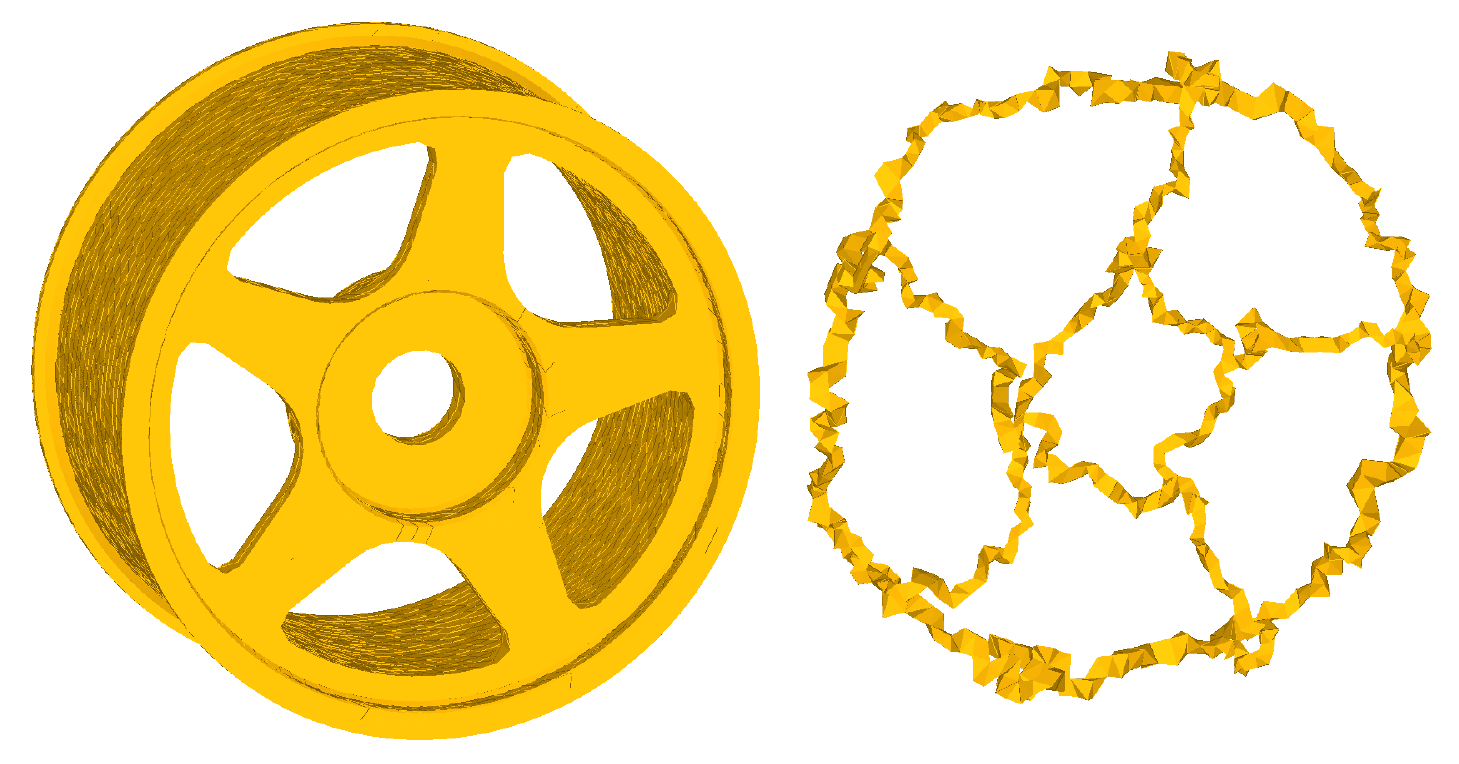}
\caption{A racing rim and its topology-preserving skeleton.}\label{ex3}
\end{figure}
\begin{figure}[!t]
\centering
\includegraphics[width=3.5in]{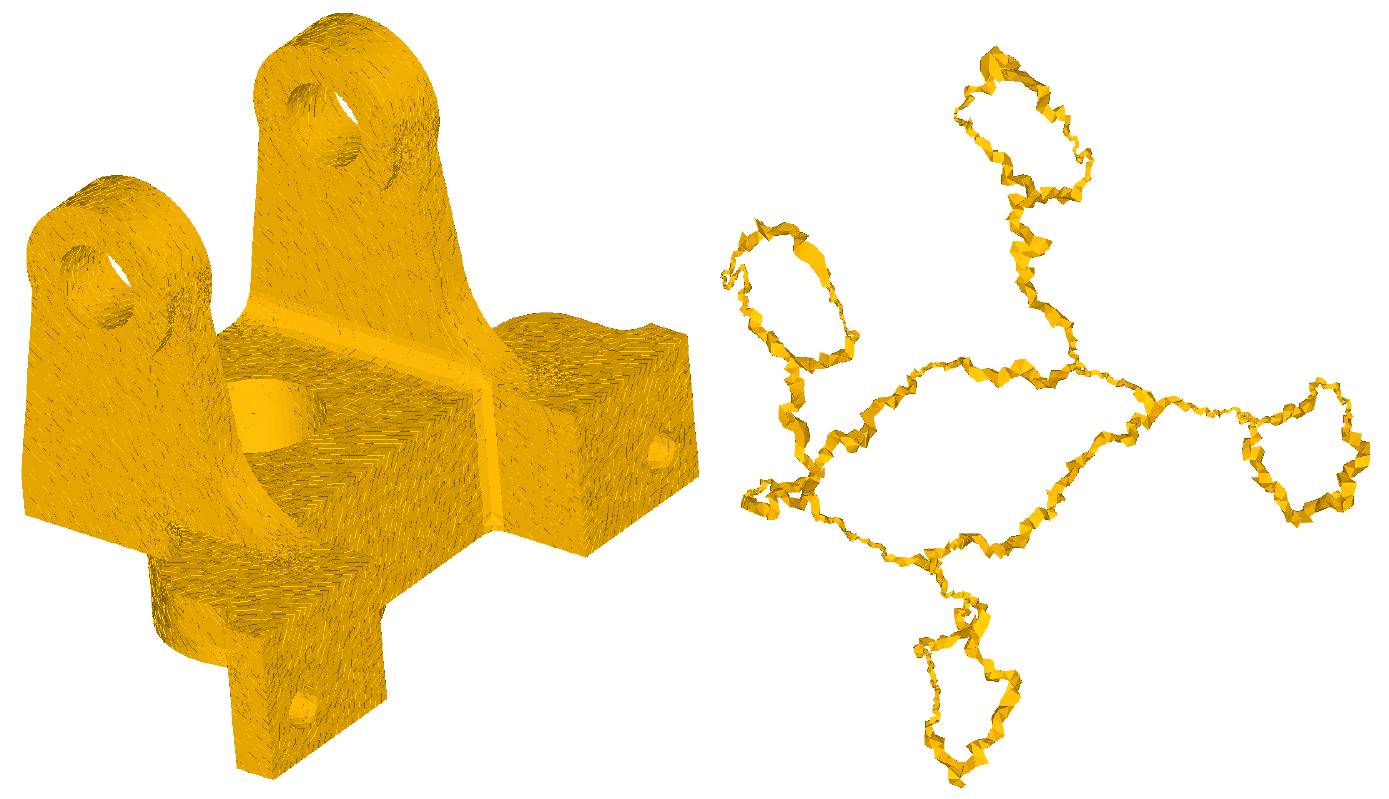}
\caption{A mechanical support and its topology-preserving skeleton.}\label{ex4}
\end{figure}
\begin{figure}[!t]
\centering
\includegraphics[width=3.5in]{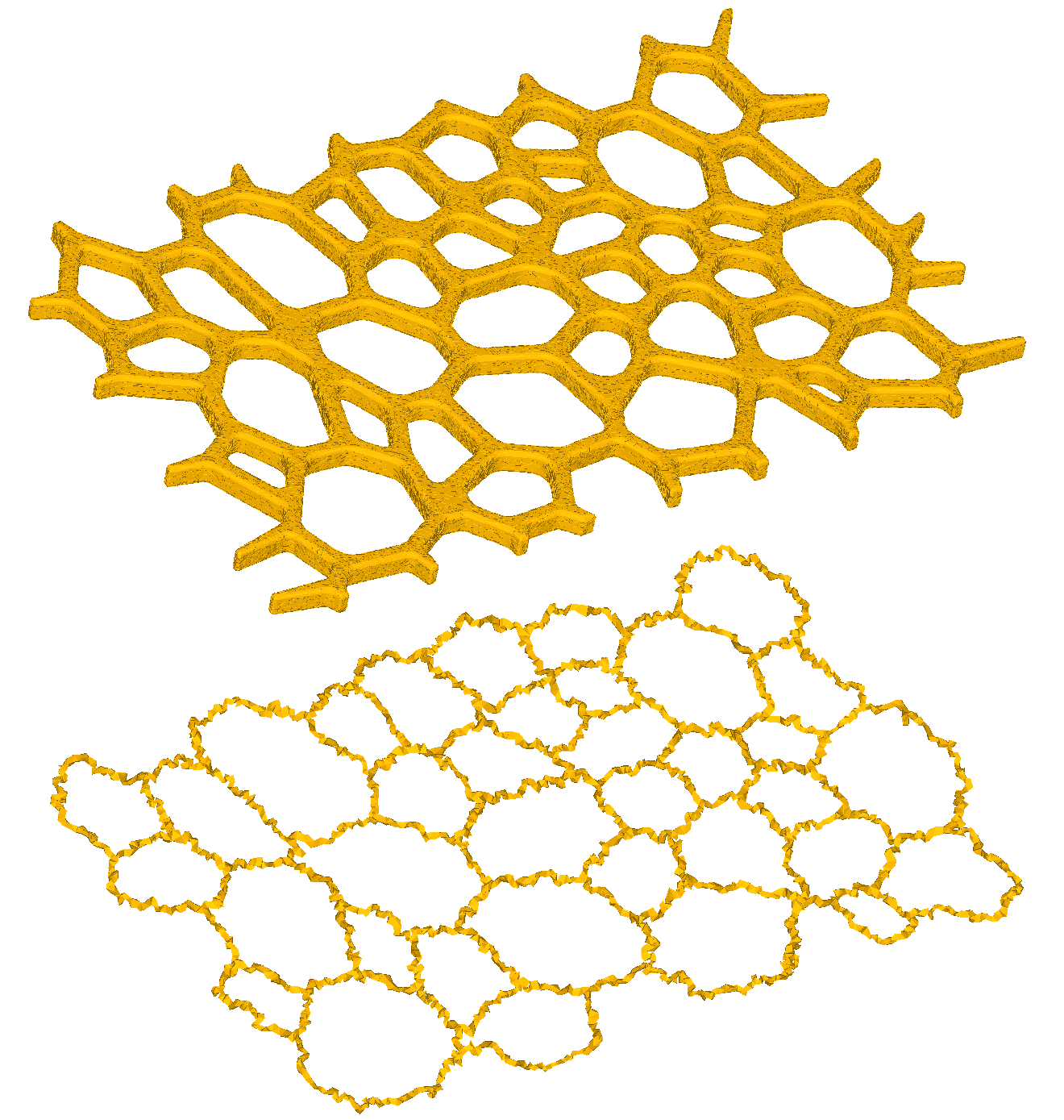}
\caption{A net and its topology-preserving skeleton.}\label{ex7}
\end{figure}
\begin{figure}[!t]
\centering
\includegraphics[width=3.5in]{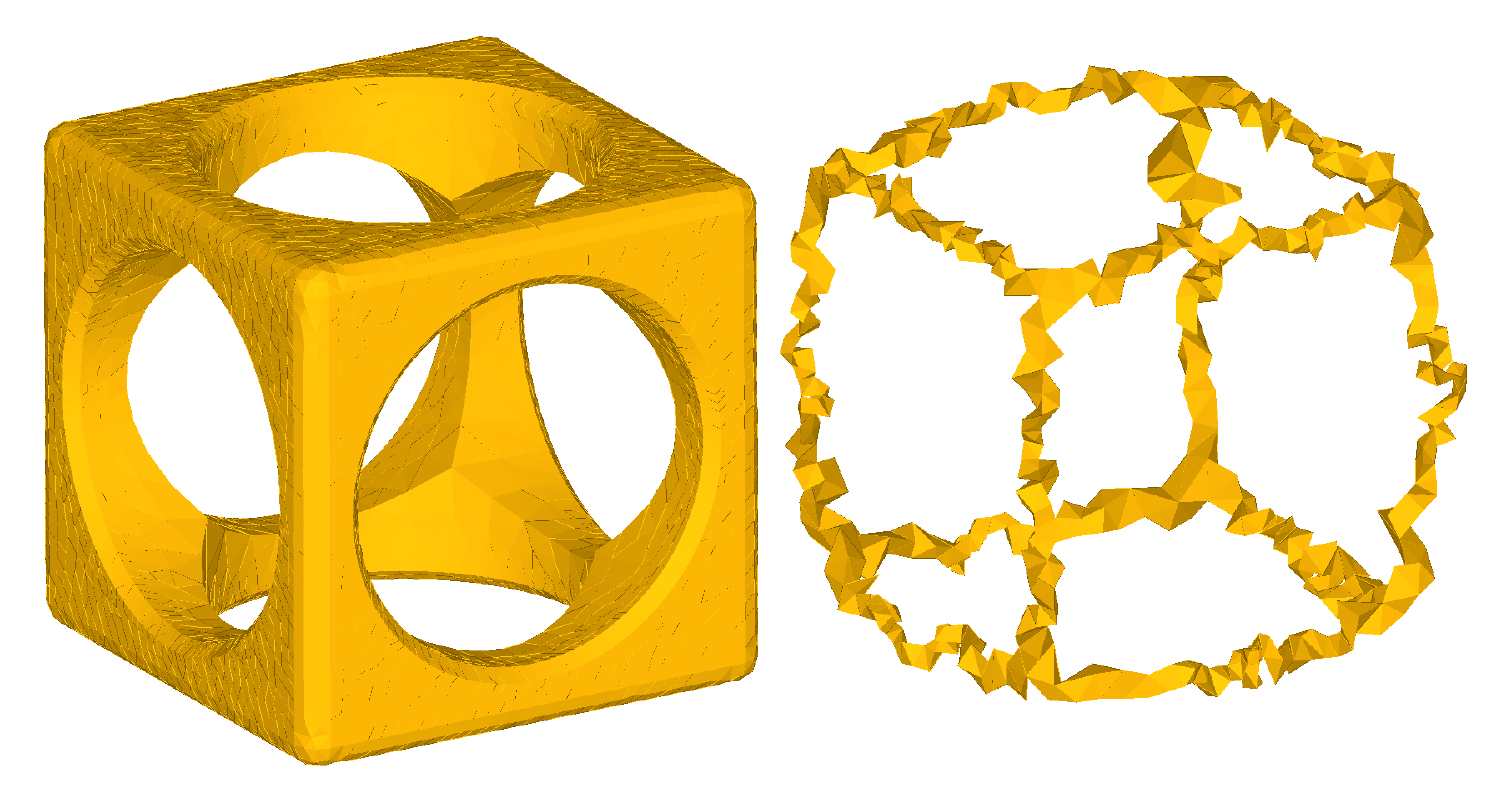}
\caption{A geometric figure and its topology-preserving skeleton.}\label{ex9}
\end{figure}
\begin{figure}[!t]
\centering
\includegraphics[width=3.5in]{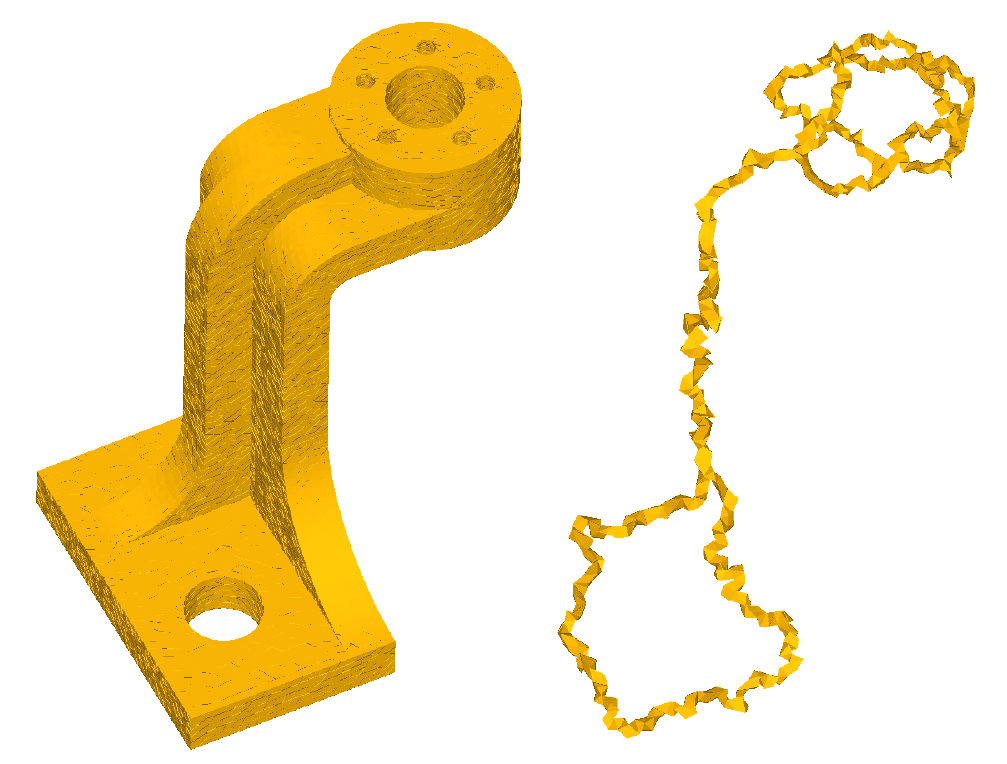}
\caption{A mechanical part and its topology-preserving skeleton.}\label{ex10}
\end{figure}
\begin{figure}[!t]
\centering
\includegraphics[width=3.5in]{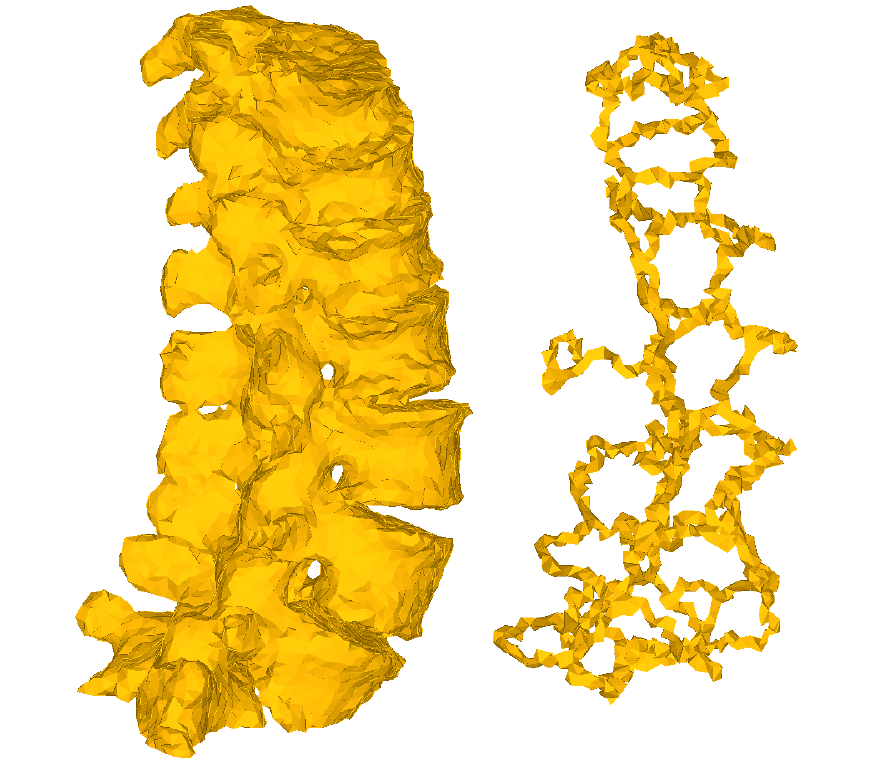}
\caption{A spine and its topology-preserving skeleton. This model is provided courtesy of IRCAD Institute (http://www.ircad.fr/) by the
AIM@SHAPE Shape Repository.}\label{ex14}
\end{figure}
%\begin{figure}[!t]
%\centering
%\includegraphics[width=3.5in]{}
%\caption{Mechanical part and its topology-preserving skeleton.}\label{ex11}
%\end{figure}
\begin{figure}[!t]
\centering
\includegraphics[width=3.5in]{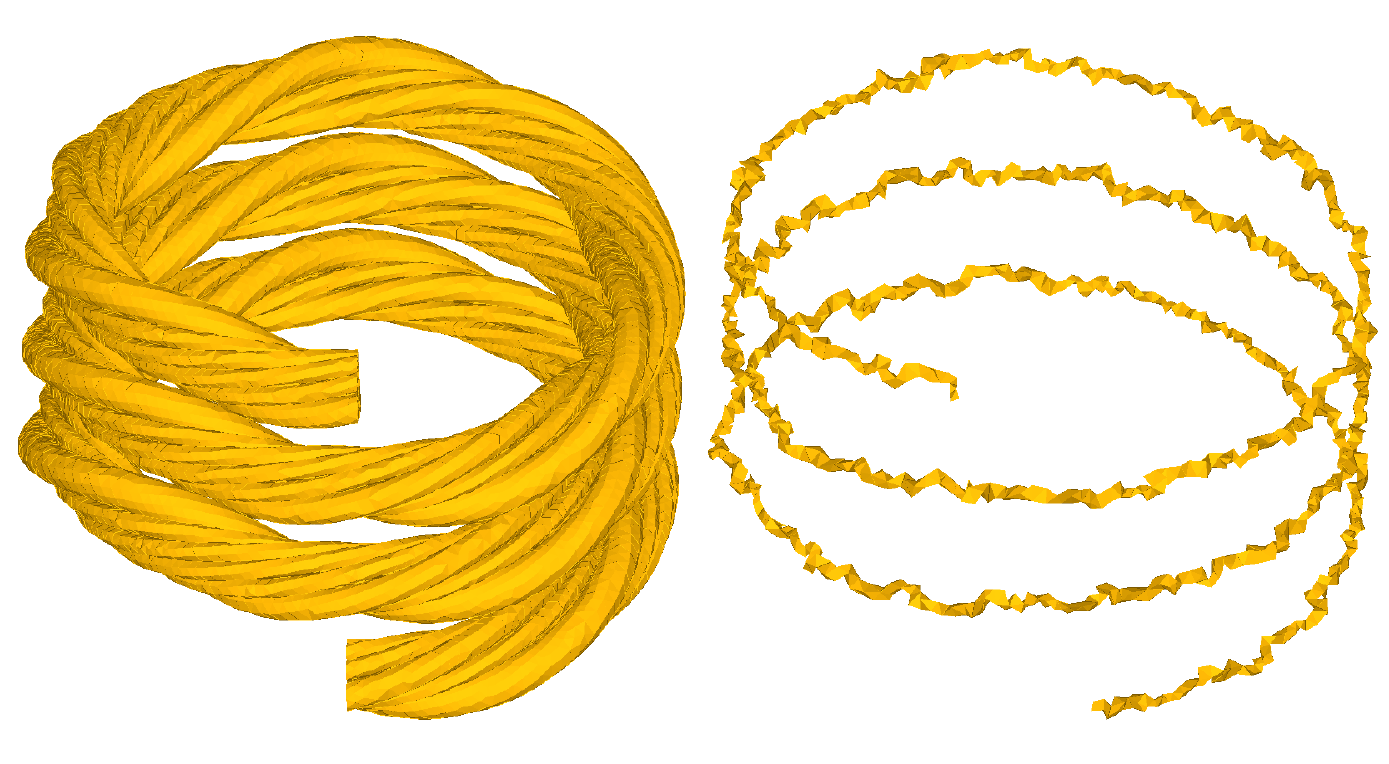}
\caption{A rope and its shape-preserving skeleton.}\label{ex5}
\end{figure}
\begin{figure}[!t]
\centering
\includegraphics[width=3.5in]{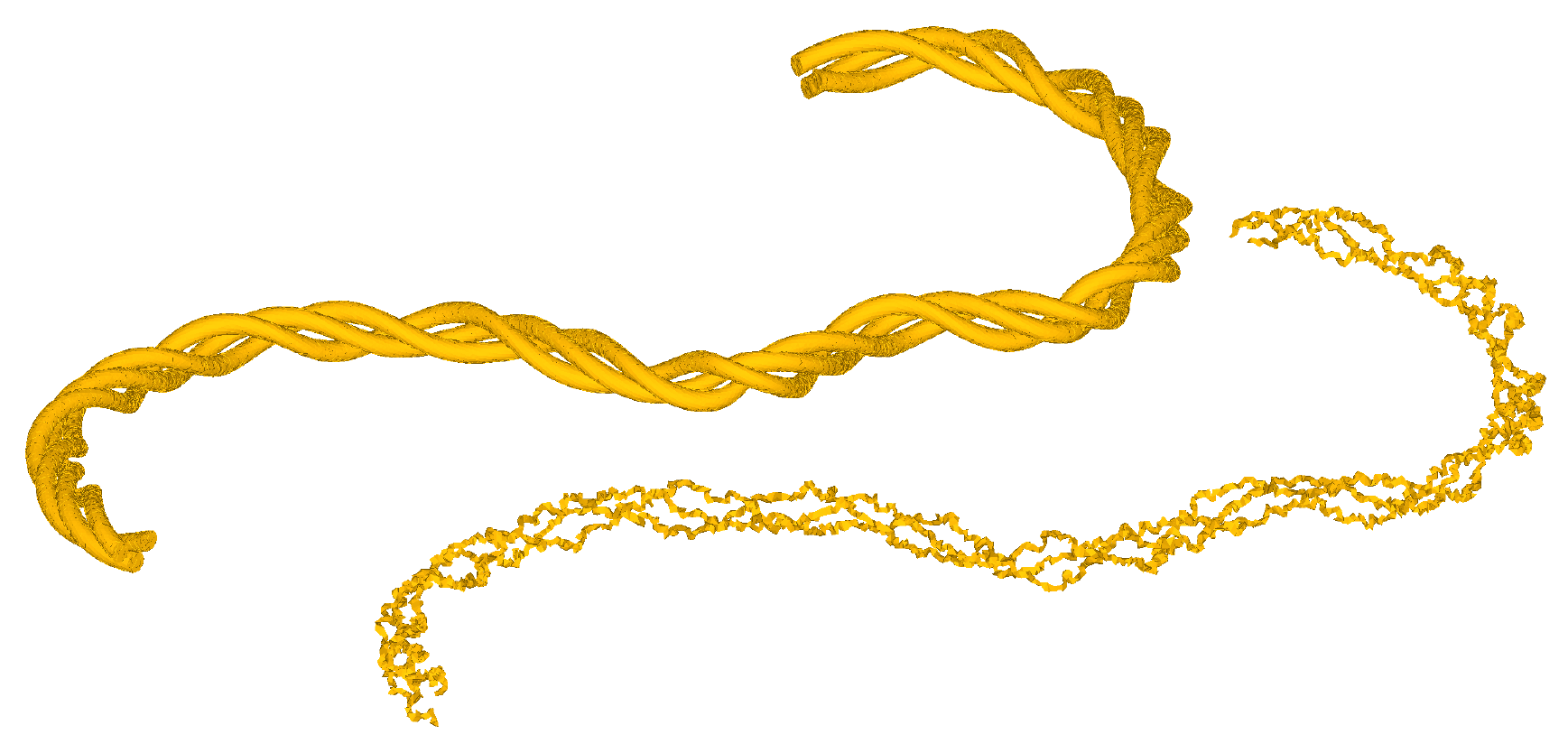}
\caption{Three twisted wires and their shape-preserving skeleton.}\label{ex6}
\end{figure}
\begin{figure}[!t]
\centering
\includegraphics[width=3.5in]{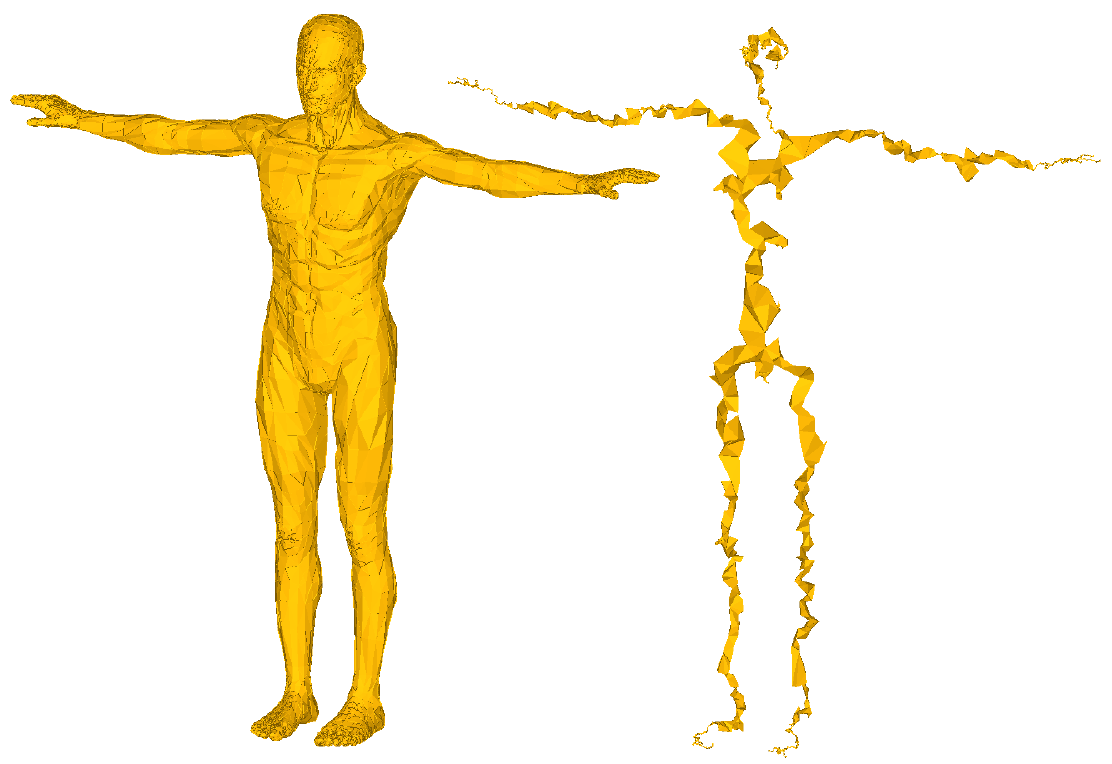}
\caption{A male body and its shape-preserving skeleton.}\label{ex12}
\end{figure}
\begin{figure}[!t]
\centering
\includegraphics[width=3.5in]{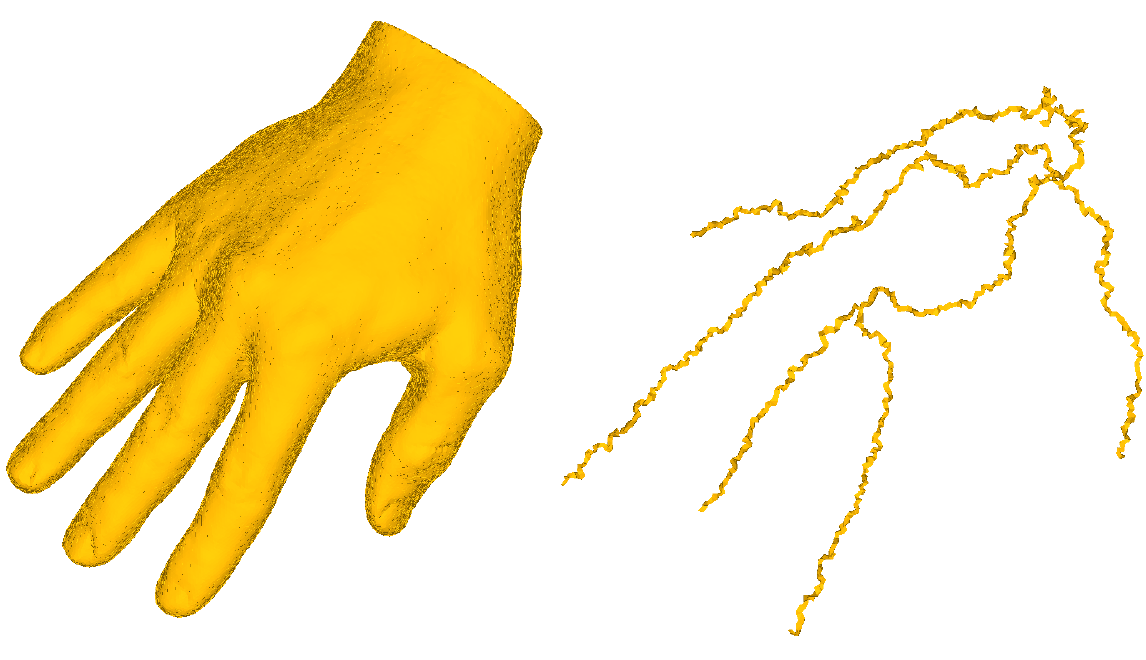}
\caption{A hand and its shape-preserving skeleton. This model is provided courtesy of INRIA (http://www.inria.fr/) and IMATI (http://www.imati.cnr.it/) by the
AIM@SHAPE Shape Repository.}\label{ex13}
\end{figure}

\section{Conclusion}
\label{sec:conclusions}
This paper introduces a topology preserving thinning algorithm for cell complexes based on iteratively culling simple cells. Simple cells, that may be seen as a generalization of simple points in digital topology, are characterized with homology theory. Despite homotopy, homology theory has the virtue of being computable. It means that, instead of resorting to complicated rule-based approaches, one can detect simple cells with homology computations.
The main idea of this paper is to give a classification of all possible simple cells that can occur in a cell complex with acyclicity tables.
These tables are filled in advance automatically by means of homology computations for all possible configurations. Once the acyclicity tables are available, implementing a thinning algorithm does not require any prior knowledge of homology theory or being able to compute homology.
The fact that acyclicity tables are filled automatically and correctly for all possible configurations provides a rigorous computer-assisted mathematical proof that the homology-based thinning algorithm preserves topology. We believe that such rigorous topological tools simplify the study of thinning algorithms and provide a clear and safe way of obtaining skeletons.

\end{document}